\PassOptionsToPackage{table}{xcolor}
\documentclass[10pt,twocolumn,letterpaper]{article}

\def\ARXIVVERSION{}

\ifdefined\ARXIVVERSION
\usepackage[pagenumbers]{cvpr}
\else
\usepackage{cvpr}
\fi

\usepackage{times}  
\usepackage{amsmath}  
\usepackage{amssymb}  
\usepackage{amsthm}   
\usepackage[hyphens]{url}  
\usepackage{graphicx}

\usepackage{algorithm}
\usepackage{algpseudocode}

\usepackage{tabularx}

\usepackage{multirow}
\usepackage{booktabs}  
\usepackage[inkscapelatex=false]{svg}
\usepackage{arydshln}
\usepackage{enumitem}
\usepackage{natbib}  
\usepackage{newfloat}
\usepackage{listings}

\usepackage{soul}
\setuldepth{foobar}

\ifx\ARXIVVERSION\undefined
\usepackage{xr}
\fi

\renewcommand{\paragraph}[1]{\vspace{.5em}\noindent\textbf{#1}}
\renewcommand\labelenumi{\thesubsection.\arabic{enumi}}

\newcounter{checksubsection}
\newcounter{checkitem}[checksubsection]

\newcommand{\maybeResize}[2][1]{\resizebox{#1\linewidth}{!}{#2}}

\newcommand{\norm}[1]{\left\lVert#1\right\rVert}
\newcommand{\E}{\mathbb{E}}
\newcommand{\Var}{\text{Var}}
\newcommand{\proofstep}[1]{\noindent\textbf{#1}\ }

\DeclareMathOperator*{\argmax}{arg\,max}
\DeclareMathOperator*{\argmin}{arg\,min}

\newtheorem{theorem}{Theorem}

\newtheorem{proposition}[theorem]{Proposition}

\newtheorem{assumption}{Assumption}
\newtheorem{remark}{Remark}

\def\secref#1{§~\ref{#1}}
\def\figref#1{Fig.~\ref{#1}}
\def\tabref#1{Tab.~\ref{#1}}
\def\eqref#1{Eq.~\ref{#1}}
\def\appref#1{Appendix~\ref{#1}}
\def\propref#1{Proposition~\ref{#1}}

\newcommand{\revdel}[1]{{\color{red}#1}}

\renewcommand{\revdel}[1]{}


\definecolor{cvprblue}{rgb}{0.21,0.49,0.74}
\usepackage[pagebackref,breaklinks,colorlinks,allcolors=cvprblue]{hyperref}

\def\paperID{35096} 
\def\confName{CVPR}
\def\confYear{2026}

\title{RLFTSim: Realistic and Controllable Multi-Agent Traffic Simulation via Reinforcement Learning Fine-Tuning}

\author {
    Ehsan Ahmadi\textsuperscript{\rm 1,\rm 2} \quad
    Hunter Schofield\textsuperscript{\rm 2,\rm 3} \quad
    Behzad Khamidehi\textsuperscript{\rm 2} \quad
    Fazel Arasteh\textsuperscript{\rm 2} \\
    Jinjun Shan\textsuperscript{\rm 3} \quad
    Lili Mou\textsuperscript{\rm 1,4} \quad
    Dongfeng Bai\textsuperscript{\rm 2} \quad
    Kasra Rezaee\textsuperscript{\rm 2} \\
    \textsuperscript{\rm 1}University of Alberta \ 
    \textsuperscript{\rm 2}Huawei Technologies Canada \\
    \textsuperscript{\rm 3}York University \ 
    \textsuperscript{\rm 4}Canada CIFAR AI Chair, Amii \\
    {\small eahmadi@ualberta.ca \  \{firstname.lastname\}@huawei.com  \ \{hunterls, jjshan\}@yorku.ca \ doublepower.mou@gmail.com}
}

\begin{document}

\maketitle

\begin{abstract}

  Supervised open-loop training has been widely adopted for training traffic simulation models; however, it fails to capture the inherently dynamic, multi-agent interactions common in complex driving scenarios. We introduce RLFTSim, a reinforcement-learning-based fine-tuning framework that enhances scenario realism by aligning simulator rollouts with real-world data distributions and provides a method for distilling goal-conditioned controllability in scenario generation. We instantiate RLFTSim on top of a pre-trained simulation model, design a reward that balances fidelity and controllability, and perform comprehensive experiments on the Waymo Open Motion Dataset. Our results show improvements in realism, achieving state-of-the-art performance. Compared with other heuristic search-based fine-tuning methods, RLFTSim requires significantly fewer samples due to a proposed low-variance and dense reward signal, and it directly addresses the realism alignment issue by design. We also demonstrate the effectiveness of our approach for distilling traffic simulation controllability through goal conditioning.
  The project page is available at \url{https://ehsan-ami.github.io/rlftsim}.

\end{abstract} %
\section{Introduction}

Simulation plays a major role in autonomous driving by providing a controllable virtual environment for the test and development of autonomous vehicles (AVs).
This is particularly significant for rare accident scenarios that an AV should be capable of handling.
Based on a statistical estimate in \cite{KALRA2016182}, verifying the safety aptitude of an AV requires it to be capable of driving 2 million kilometers without fatal accidents to claim that it is as safe as humans.
Considering the costs of such verification, the use of simulation is justified to make the verification and development of AVs feasible.

In this work, we focus on the problem of microscopic multi-agent traffic simulation, as we elaborate in \secref{background}.
This problem has previously been addressed with rule-based simulators, which can provide various feedback signals from kinematics and physics simulations \cite{gulino2024waymax,alban2025smart}.
These simulators control the agents either via replaying logged trajectories or using simple rule-based models, such as the constant-speed actor model or the Intelligent Driver Model \cite{idm}. This, arguably, leads to a significant sim-to-real gap.
Even for the case of ground-truth data log replay, since the agents' behavior is not reactive, the simulation becomes unrealistic when it faces closed-loop deployment.

\begin{figure}[!tp]
    \centering
    \includegraphics[width=\linewidth]{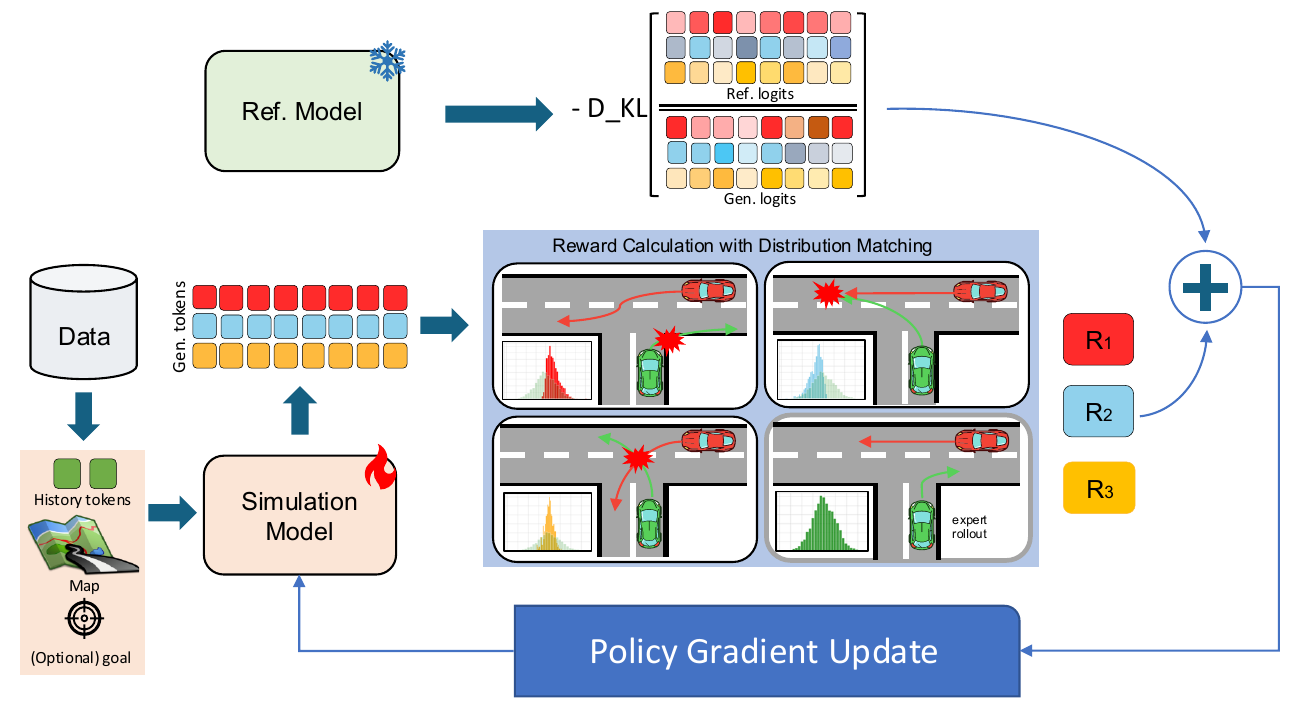}
    \caption{Post-training with RLFTSim. For a seed scenario from the dataset, multiple rollouts are generated. The main reward function is defined based on time-independent distribution matching of the simulated scenarios and the expert demonstration in several feature spaces. This closed-loop on-policy optimization enhances realism beyond what open-loop imitation learning achieves alone.
    }
    \label{fig:teaser}
\end{figure} 
Recently, learning-based simulation models have been introduced to bridge the realism gap and generate a reactive multi-agent traffic simulation rollout, which is an episode of the simulation for all of the agents in a traffic scenario \cite{Wu2024SMART, Zhou2024BehaviorGPT}. 
However, previous models are mostly trained in an open-loop setting with imitation learning objectives \cite{Wu2024SMART, Zhou2024BehaviorGPT}.
We argue that to make simulation models capable of overcoming distribution shifts caused by error accumulation in closed-loop simulation and avoiding the causal confusion problem \cite{causalConfusion, ahmadi2025curbattentioncausalattention}, the training (or post-training) of a simulation model needs to be done in a closed-loop environment.

An effective simulation model needs to be capable of generating realistic data that accurately represents daily driving scenarios to make its adoption worthwhile.
While still imitating human agents, it needs to be aligned with traffic rules and the physical constraints of the environment.
In the past, a combination of open-loop supervision via imitation learning and closed-loop training via reinforcement learning has been used for ego-centric motion planning \cite{Lu2023RobustILRL}. However, effective scaling of this approach for multi-agent simulation models remains an open area of research.

In this work, inspired by the successful application of reinforcement learning with verifiable rewards for new skill learning in foundation models \cite{shao2024deepseekmath}, 
we present a reinforcement learning-based fine-tuning approach for traffic simulation (RLFTSim) to address the physics and traffic-rules alignment problem for a pre-trained model with an open-loop imitation learning objective. 
We deal with the alignment problem with a closed-loop post-training approach (\figref{fig:teaser}).
Specifically, we focus on the Waymo Open Simulation Challenge (WOSAC) and use its definition of realism meta-metric (RMM) as the optimization objective function for RLFT.
However, RMM in its original form is a sparse population-based signal calculated per group of 32 rollouts, making it sample-inefficient for direct use as a reward in RL. 
To overcome this issue, we present realism Meta-metric Leave-One-Out (MLOO) as a low-variance and dense adaptation of the RMM for RL fine-tuning.
To the best of our knowledge, this is the first successful application of the RMM for RL fine-tuning in traffic simulation.
We present theoretical variance scaling analysis that justify our proposed MLOO as a reward signal.
Moreover, we discuss the issue of controllability in traffic simulation as a part of the model alignment problem, 
and with the aid of hindsight experience replay (HER) \cite{HER}, goal-conditioning, and RLFT, we present an approach to distill controllability.

Our contributions are: 
\textbf{(i)} We introduce realism meta-metric as the reward signal for realism alignment with closed-loop RL-based fine-tuning in multi-agent traffic simulation (\secref{method}).
\textbf{(ii)} We provide a theoretical variance analysis for the current WOSAC's realism meta-metric, and we propose MLOO to have a low-variance and dense reward signal (\secref{sec:mloo}).
\textbf{(iii)} We introduce the application of hindsight experience replay with RLFT for the behavior controllability distillation with goal-conditioning in the simulation process (\secref{sec:controllability}).
\textbf{(iv)} We conduct comprehensive experiments showcasing the enhanced realism with RLFT to achieve the state-of-the-art performance, and the distilled controllability skills (\secref{experiments}).

\section{Related Works}
\label{related}

\paragraph{Learning-based Traffic Simulation Approaches.}
Learning-based models have emerged as a cornerstone for realistic traffic simulation, moving beyond heuristic rule-based simulators that struggled with complex maneuvers and interactions \cite{idm}.
Early works like \cite{Suo2021TrafficSim} modeled multi-agent driving behavior with implicit latent variables and differentiable closed-loop training, significantly improving realism over rule-based baselines.
Imitation learning has been widely adopted for traffic simulation \cite{gump2024, kigras2025,lin2025revisitmixturemodelsmultiagent,  vbd2024}.
The Waymo Open Motion Dataset (WOMD) \cite{WOMD_MF}, nuPlan \cite{caesar2021nuplan}, and similar datasets of millions of real trajectories have enabled training transformer-based models that treat multi-agent traffic simulation as a \emph{next-token prediction} problem \cite{Wu2024SMART, hu2024solving, philion2024trajeglish,  zhang2025trajtok, Zhou2024BehaviorGPT}. 
SMART predicts the next motion token for each agent conditioned on vectorized state representations and map context \cite{Wu2024SMART}. By training on over a billion motion tokens from multiple datasets, SMART achieved state-of-the-art realism in the WOMD Sim Agents Challenge. 
In this work, we utilize the superior power of SMART in traffic simulation by choosing it as a reference model for alignment fine-tuning.

\paragraph{Model Alignment and Fine-Tuning.}
A key challenge in learned simulators is \emph{model alignment}: ensuring the distribution of simulated behaviors in closed-loop matches real-world driving data.
Pure behavior cloning suffers from the accumulation of errors during closed-loop deployment \cite{ross2011reduction}, where small mistakes compound and drive agents to unseen or unrealistic states.
In \cite{Zhang2024ClosedLoop}, \emph{Closest Among Top-K} (CAT-K) sampling is used to perturb the expert policy.
By generating multiple plausible next actions and selecting the one closest to the ground-truth trajectory for training, it enables closed-loop learning solely from offline demonstrations. 
However, the absence of explicit alignment objectives can limit adherence to traffic rules and physical constraints.

Reinforcement learning and other closed-loop fine-tuning techniques have been explored to directly optimize behaviors in traffic simulation \cite{Igl2022Symphony}. 
A core challenge is the lack of an explicit reward signal for human-like driving, making reward design difficult and often inadequate. 
To address this, recent work combines imitation learning with reinforcement learning. For instance, \cite{Lu2023RobustILRL} shows that fine-tuning an imitation-learned policy with safety rewards reduces collisions by over 38\% in rare scenarios. However, this approach, focused on single-agent planning, does not directly extend to multi-agent traffic simulation.

Recent studies have applied reinforcement learning from human preferences for traffic simulation alignment \cite{trafficrlhf2024,lanechangeRLHF2024}.
However, they rely on costly human feedback, which limits scalability.
A preliminary study in \cite{tian2025direct} shows that ranking a traffic simulation can take over 40 seconds, making large-scale human preference annotation impractical. 

To address the scalability limitation, \cite{tian2025direct} introduces Direct Preference Alignment from Occupancy Measure Feedback (DPA-OMF) for post-training realism alignment.
While this method avoids extra annotation, it is sample inefficient as it depends on oversampling traffic simulation rollouts and only uses a subset of them.
Moreover, it is categorized as an offline method, which comes with limitations compared to on-policy methods \cite{tang2024understandingperformancegaponline}.
In contrast, RLFTSim is an on-policy method that, by using MLOO as the reward signal, effectively utilizes all simulation rollouts for closed-loop realism alignment.

\paragraph{Controllability in Learned Simulators and Goal Conditioning.}
Beyond realism, an important frontier is \emph{controllability}: the ability to steer the simulator or specify conditions so that generated traffic scenarios meet particular requirements.
Early learned simulators mostly acted as black boxes, generating unconstrained traffic patterns.
Now, there is a growing demand for methods to control \emph{what} scenarios are produced.
One promising direction is goal-conditioning of agent policies.
For example, \cite{Zhong2023Diffusion} factors their policy into high-level goal generation and low-level trajectory execution.
By conditioning each agent on an explicit goal, the simulator can ensure diversity without sacrificing realism.
Similarly, \cite{Zhang2023TrafficBots} introduces configurable latent variables, such as destinations and driver “personality” traits, to modulate each agent’s behavior, enabling simulations that span from aggressive to cautious driving styles on demand.
Hierarchical imitation frameworks such as \cite{xu2023bitsICRA} also improve control by decoupling strategic decisions from micromanagement. Another approach to achieve controllability over agents is by integrating language models to enable promptable simulation as in \cite{tan2025prosim}.

Conditioning behavior on explicit goals in autonomous driving spans paradigms from target-driven prediction \cite{zhao2021tnt} to goal-based planning \cite{albrecht2021interpretable}.
While goals are often task-specific and not trivial to define, \cite{GCRL} surveys various formulations of goal-conditioned reinforcement learning, where goals are treated as desired properties or features of agent behavior.
In the context of autonomous driving, many motion prediction benchmarks \cite{WOMD_MF, Argoverse2} define a miss rate metric that evaluates how well a model can predict trajectories that result in final agent states close to ground truth.
In \cite{HRPPO}, agent rollouts are explicitly conditioned on the final coordinates of the ground-truth trajectories and optimized using a sparse reward aligned with these metrics.
Building upon these works, we aim to distill controllability within the SMART simulator while maintaining its ability to produce highly realistic rollouts.

 \vspace{0.5em}

\section{Background}
\label{background}

\paragraph{Multi-Agent Traffic Simulation.} We formulate multi-agent traffic behavior modeling as a Contextual Markov Decision Process defined by the tuple of $(S_t,A_t,S_{t+1},R_{t+1},C,G)$ with discrete time steps $t \in [1,T]$.
The state $S_t = \{S_{t'}^j \mid j \leq N_a, t' \leq t\}$ captures the finite-horizon history of up to $N_a$ agents, the joint action $A_t = \{A_t^j \in \mathcal{V} \mid j \leq N_a\}$ denotes their tokenized decisions over the discrete vocabulary $\mathcal{V}$, and the reward is represented as $R_{t+1} \in \mathbb{R}$.
The context includes vectorized static and dynamic map features $C = \{M \in \mathbb{R}^{N_{m} \times D_{m}}, L \in \mathbb{R}^{N_{tl} \times D_{tl}} \}$, where M is the static map information comprising up to $N_{m}$ downsampled vector points with the feature dimension $D_m$, and $L$ is the dynamic map information including the past temporal states of up to $N_{tl}$ traffic light states with the feature dimension $D_{tl}$.
The optional goal specification $G = \{\textbf{x}^j_g\}_{j \in \mathcal{G}}$ defines target objectives for a subset $\mathcal{G} \subseteq \{1,\ldots,N_a\}$ of agents, where $\textbf{x}^j_g$ represents the goal coordinate for agent $j$. 
The simulation is categorized as \textit{goal-free} if $\mathcal{G} = \emptyset$, and \textit{goal-conditioned} otherwise.

\paragraph{WOSAC Realism Meta-Metric (RMM).}
The Waymo Open Simulation Challenge evaluates simulator realism using a meta-metric \cite{Montali2023neurips_wosac} that compares feature distributions between simulated and real trajectories. 
Given $N=32$ simulated rollouts $\{\tau_i\}_{i=1}^N$ and a ground-truth trajectory $\tau^*$, RMM extracts kinematic features (e.g., speed, acceleration), interactive features (e.g., distance to other agents, time-to-collision), and map-based features (e.g., distance to road boundary), discretizing them into $K=20$ bins to compute:
\begin{equation}
\mathrm{RMM} = \sum_{d=1}^D w_d \left[\prod_{(a,t) \in V} \hat P_{d,a}(k^*_{d,a,t}) \right]^{\frac{1}{|V|}},
\label{eq:RMM}
\end{equation}
where $\hat P_{d,a}(k)$ is the empirical probability of agent $a$'s feature dimension $d \in \{1,\ldots,D\}$ at bin $k$ using the simulated rollouts, $k^*_{d,a,t}$ indicates the ground-truth bin, $w_d$ is the weight for feature dimension $d$, and $V$ is the set of (agent, time) pairs for evaluation. 
Higher RMM values indicate better alignment with real-world behavior. 
The complete RMM formulation is detailed in \appref{app:rmm_derivation}.

\paragraph{REINFORCE Leave-One-Out (RLOO).}
RLOO \cite{Kool2019Buy4R} provides variance-reduced policy gradients by using bias-free leave-one-out baselines. 
For $N$ rollouts from the same context, the advantage of rollout $i$ is computed as $R_i - \frac{1}{N-1}\sum_{j \neq i} R_j$, where rewards are computed independently for each rollout.
\section{Traffic Simulation Alignment with Reinforcement Learning}
\label{method}
In this section, we present RLFTSim, a reinforcement learning-based fine-tuning approach for traffic simulation alignment. 
RLFTSim operates in two distinct modes to address different simulation requirements: \textit{goal-free simulation} for pure realism alignment, and \textit{goal-conditioned simulation} for controllability while maintaining realism. 
Our post-training pipeline takes a pre-trained imitation learning model and fine-tunes it using reinforcement learning with carefully designed reward signals. 

The foundation of our approach lies in addressing a fundamental challenge: how to create an effective reward signal for RL-based simulation alignment. 
While RMM provides a comprehensive measure of traffic realism, its direct use as a reward signal is problematic due to sparsity and variance issues.
We introduce the Meta-metric Leave-One-Out (MLOO) method as a dense, low-variance replacement that enables sample-efficient training for both simulation modes.
Beyond pure realism, many practical applications require controllable simulations for targeted scenario testing and safety validation.
For such cases, we extend our approach with goal-conditioned simulation, combining MLOO with an auxiliary goal attainment reward to enable controllable agent behaviors without sacrificing realism.

We structure our method as follows: First, we address the challenges of using RMM as a reward signal and introduce MLOO as our core contribution (\secref{sec:mloo}).
Then, we present the architectural extensions and training strategies required for goal-conditioned simulation, including goal conditioning mechanisms and hindsight experience replay (\secref{sec:controllability}).

\subsection{Low-Variance and Dense Reward Signal with Meta-metric Leave-One-Out}
\label{sec:mloo}

A key challenge in using RL for traffic simulation alignment is designing an effective reward signal. 
A natural choice is to minimize the Average Distance Error (ADE) between the simulated trajectories and the ground-truth trajectory. Although this objective is used in the closed-loop RL fine-tuning \cite{peng2024_rlft_nvidia}, we argue that ADE is not a good choice as a reward signal for this setting. Once the agents diverge from the ground-truth trajectory due to the stochasticity of the simulation model, the ADE does not always provide good corrective supervision.
For instance, suppose that an agent is involved in a safety-critical situation in a simulation rollout; in this case, the most realistic next action may not be to return abruptly to a pre-recorded ground-truth trajectory.

RMM aims to capture the distribution of realistic driving behaviors using a comprehensive set of features \cite{Montali2023neurips_wosac}. It is less sensitive to the stochasticity of the simulation model as it relaxes the requirement of following the expert behavior at each time step (see \appref{app:rmm_derivation} for a detailed definition of RMM).
While RMM provides a comprehensive evaluation of traffic rollouts, its direct application as a reward signal is problematic. 
As a mapping from a set of 32 rollouts to a single scalar value, RMM is inherently sparse, which leads to sample inefficiency in RL training. 
Therefore, despite being the official metric for realism in the leading simulation benchmark, 
it has not been used as an optimization target for RL.
A naïve remedy for the sparsity problem is to compute RMM over smaller groups of rollouts (e.g., $N{=}4$ instead of $32$), yielding more reward values per batch; however, each RMM estimate is based on fewer samples, thus potentially destabilizing training due to higher variance.

\paragraph{Meta-metric Leave-One-Out (MLOO).} To achieve a better density-variance trade-off, we introduce a dense per-rollout reward signal, $\mathrm{RMM}_i^{\text{MLOO}}$, defined as:
\begin{equation}
    \mathrm{RMM}_i^{\text{MLOO}} = \frac{1}{N} \sum_{j=1}^N \mathrm{RMM}_{-j} - \mathrm{RMM}_{-i},
    \label{eq:MLOO}
\end{equation}
where $\mathrm{RMM}_{-j}$ is the realism meta-metric computed on the set of rollouts excluding the $j$-th rollout.
By construction, $\sum_{i=1}^N \mathrm{RMM}_i^{\mathrm{MLOO}} = 0$; the $\mathrm{RMM}_i^{\mathrm{MLOO}}$ signal measures each rollout's relative contribution rather than estimating the scalar RMM directly.

Our optimization uses REINFORCE with $\mathrm{RMM}_i^{\mathrm{MLOO}}$ as the per-rollout reward and Kullback--Leibler (KL) divergence regularization against the pre-trained reference model to maintain training stability (\appref{app:implementation_details}). Below, we formalize two theoretical properties of the MLOO gradient component: the resulting policy-gradient estimator is unbiased (\propref{prop:mloo_pg}), and MLOO achieves quadratic variance reduction in $N$ compared to per-rollout alternatives (\propref{prop:var-meta-metric-is}--\ref{prop:mloo-rloo-variance}). The leave-one-out construction targets $\mathbb{E}[\mathrm{RMM}(\tau_{1:N-1})]$, i.e., the expected RMM on $N{-}1$ rollouts rather than $N$; the difference is negligible for large $N$. Detailed proofs are provided in \appref{app:proofs}.

\begin{proposition}[Unbiased gradient estimation with MLOO]
\label{prop:mloo_pg}
Let $\tau_{1:N}=(\tau_1,\dots,\tau_N)$ be $N$ i.i.d.\ rollouts sampled from
the policy $\pi_\theta$. Applying REINFORCE with per-rollout reward $\mathrm{RMM}_i^{\mathrm{MLOO}}$ as defined in \eqref{eq:MLOO}, the policy-gradient estimator
\begin{equation}
g = \sum_{i=1}^{N}\nabla_\theta \log \pi_\theta(\tau_i)\,\mathrm{RMM}_i^{\mathrm{MLOO}}
\end{equation}
is an unbiased estimator of $\nabla_\theta \mathbb{E}\!\left[\mathrm{RMM}(\tau_{1:N-1})\right]$.
\end{proposition}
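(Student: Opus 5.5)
Our plan is to expand $g$ using \eqref{eq:MLOO} and treat the two resulting pieces separately. Write $\bar R = \frac{1}{N}\sum_{j=1}^N \mathrm{RMM}_{-j}$ and $s_i = \nabla_\theta \log \pi_\theta(\tau_i)$. Then
\begin{equation*}
g = \sum_{i=1}^N s_i\,\bar R - \sum_{i=1}^N s_i\,\mathrm{RMM}_{-i}.
\end{equation*}
We first reduce the target gradient to a form comparable with these pieces. We assume RMM is a symmetric function of its argument set, so that $\mathbb{E}[\mathrm{RMM}_{-i}] = \mathbb{E}[\mathrm{RMM}(\tau_{1:N-1})]$ for every $i$. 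We also assume enough regularity (bounded RMM, finitely many discrete tokens) to interchange gradient and expectation. Applying the score-function identity to the product density $\prod_{k\neq i}\pi_\theta(\tau_k)$ gives
\begin{equation*}
\nabla_\theta \mathbb{E}[\mathrm{RMM}_{-i}] = \mathbb{E}\Bigl[\sum_{k\neq i} s_k\,\mathrm{RMM}_{-i}\Bigr].
\end{equation*}
Averaging over $i$ yields
\begin{equation*}
\nabla_\theta \mathbb{E}[\mathrm{RMM}(\tau_{1:N-1})] = \frac{1}{N}\,\mathbb{E}\Bigl[\sum_{i}\sum_{k\neq i} s_k\,\mathrm{RMM}_{-i}\Bigr].
\end{equation*}

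Next we compute the expectation of each piece of $g$. For the second piece, $\mathrm{RMM}_{-i}$ does not depend on $\tau_i$. By independence and $\mathbb{E}[s_i]=0$, we get $\mathbb{E}[s_i\,\mathrm{RMM}_{-i}]=0$, exactly as for a leave-one-out baseline in RLOO. So only the first piece contributes:
\begin{equation*}
\mathbb{E}[g] = \frac{1}{N}\sum_{i}\sum_{j}\mathbb{E}[s_i\,\mathrm{RMM}_{-j}].
\end{equation*}
The diagonal terms $i=j$ vanish by the same argument. What remains is $\frac{1}{N}\sum_{j}\sum_{i\neq j}\mathbb{E}[s_i\,\mathrm{RMM}_{-j}]$, which is exactly the expression for $\nabla_\theta \mathbb{E}[\mathrm{RMM}(\tau_{1:N-1})]$ obtained above. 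This establishes unbiasedness.

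The main point of care is the bookkeeping in the last step: showing that the diagonal terms vanish and that the surviving off-diagonal double sum matches the score-function expansion of the $(N-1)$-sample objective. This relies on the symmetry of RMM in its rollouts and on the i.i.d. sampling. A second point is the sign convention of \eqref{eq:MLOO}. As written, the reward is $\bar R - \mathrm{RMM}_{-i}$, which is large when removing $\tau_i$ lowers RMM, i.e., when $\tau_i$ helps. The leftover term $-\sum_i s_i \mathrm{RMM}_{-i}$ contributes zero in expectation, so the sign does not affect the result. I would state that explicitly so the reader sees the correct orientation: the nonzero expectation comes entirely from the $\bar R$ term.

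Finally, the regularity assumptions should be stated explicitly. Since actions are tokenized over the finite vocabulary $\mathcal{V}$ with a finite horizon $T$, the rollout space is finite. Differentiation under the expectation is then just differentiation of a finite sum, and no further technical conditions are needed.
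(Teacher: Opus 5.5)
Your proof is correct and follows essentially the same route as the paper's. Both arguments kill the $-\sum_i s_i\,\mathrm{RMM}_{-i}$ term by independence and the zero-mean score, drop the diagonal $s_j\,\mathrm{RMM}_{-j}$ contributions of the $\bar R$ piece the same way, identify the surviving off-diagonal sum with the score-function gradient of $\mathbb{E}[\mathrm{RMM}_{-j}]$ over $\tau_{-j}$, and use exchangeability to equate this with $\nabla_\theta\mathbb{E}[\mathrm{RMM}(\tau_{1:N-1})]$. The differences are cosmetic: you expand the target first and match it, whereas the paper collapses $\mathbb{E}[g]$ toward the target, and you state the finite-rollout-space regularity explicitly where the paper leaves it implicit.
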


Having established unbiasedness, we now analyze the variance scaling of MLOO --- the property that determines its practical effectiveness for RL training.

\begin{assumption}
\label{ass:meta-metric-is}
Let $\mathcal{B} = \{B_1, \ldots, B_K\}$ be a partition of the feature space into $K$ bins. We assume:
{\renewcommand{\labelenumi}{\arabic{enumi}.}
\begin{enumerate}
    \item Feature observations across time steps are independent.
    \item The ground-truth trajectory $\tau^*$, via the empirical histogram of per-timestep feature values, induces a target distribution $p = (p_1, \ldots, p_K)$ where $p_k = \Pr[f^* \in B_k]$.
    \item The simulator produces samples from a proposal distribution $q = (q_1, \ldots, q_K)$ where $q_k = \Pr[f^{(sim)} \in B_k]$.
    \item The support condition holds: $\mathrm{supp}(p) \subseteq \mathrm{supp}(q)$.
\end{enumerate}}
\end{assumption}
\begin{remark}
\label{rem:independence-assumption}
Independence of time steps, Assumption 1.1, is a deliberate simplification assumption
considered in the original definition of Waymo Realism Meta-Metric, and it provides crucial flexibility, preventing the metric from reducing to an overly strict measure of ground-truth imitation at each time step \cite{Montali2023neurips_wosac}. Likewise, inter-agent correlations within a rollout (e.g., from interactive features) may further inflate the variance constants.
\end{remark}

To facilitate the variance analysis, we model the actual RMM computation where the simulator generates rollouts directly from distribution $q$, while $\alpha_k$ represents fixed empirical frequencies from the ground truth trajectory.

First, we establish baseline variance scaling for the realism meta-metric itself. This provides the foundation for understanding how leave-one-out methods improve upon direct meta-metric usage. Throughout, $\Var(\cdot)$ denotes variance over the simulated rollouts.

\begin{proposition}[Variance Scaling with Simulator Bias]
\label{prop:var-meta-metric-is}
Under Assumption~\ref{ass:meta-metric-is}, for $N$ rollouts of length $T$, the realism meta-metric~\eqref{eq:RMM} satisfies
\begin{equation}
\Var(\mathrm{RMM}) = O\!\left((\hat{N}_{\mathrm{eff}} \cdot T)^{-1}\right),
\end{equation}
where $\hat{N}_{\mathrm{eff}} = N / \hat{\kappa}$ is the effective sample size, $\hat{\kappa} = \max_{d}\,\kappa_d \geq 1$, and $\kappa_d = \sum_{k=1}^K \alpha_{k,d}^2 / q_{k,d}$ measures the mismatch between the simulator bin probabilities $q_{k,d}$ and the ground-truth bin frequencies $\alpha_{k,d}$ for feature dimension $d$.
\end{proposition}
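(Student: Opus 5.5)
The plan is to reduce the RMM to a weighted sum of per-feature geometric-mean likelihoods and then handle each feature dimension $d$ with a delta-method argument. Fix $d$ and, following the paper's simplification, pool over agents so the per-dimension term reads
\[
\mathrm{RMM}_d=\Bigl[\prod_{(a,t)\in V}\hat P_d(k^*_{a,t})\Bigr]^{1/|V|}=\exp\Bigl(\sum_{k=1}^K \alpha_{k,d}\log \hat q_{k,d}\Bigr),
\]
where $\alpha_{k,d}$ is the fraction of evaluated ground-truth time steps falling in bin $k$. These fractions are fixed, as stated before the proposition. The quantity $\hat q_{k,d}$ is the empirical bin frequency built from the simulated feature values. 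Under Assumption~\ref{ass:meta-metric-is}.1 these are $M=N\cdot T$ independent draws from $q_d$, so $(M\hat q_{1,d},\dots,M\hat q_{K,d})$ is multinomial with parameters $M$ and $q_d$. Assumption~\ref{ass:meta-metric-is}.4 ensures $q_{k,d}>0$ whenever $\alpha_{k,d}>0$, so the logarithm is finite with probability tending to one. In practice an $\epsilon$-floor on $\hat P$, as WOSAC uses, makes the quantity well defined; I would note this and treat the event of an empty bin as exponentially small.

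The core step is a first-order expansion of $L_d=\sum_k \alpha_{k,d}\log\hat q_{k,d}$ around $q_d$:
\[
L_d\approx \sum_k\alpha_{k,d}\log q_{k,d}+\sum_k \frac{\alpha_{k,d}}{q_{k,d}}(\hat q_{k,d}-q_{k,d}).
\]
Using the multinomial covariance $\mathrm{Cov}(\hat q_k,\hat q_l)=(q_k\delta_{kl}-q_kq_l)/M$ and $\sum_k\alpha_k=1$, the variance of the linear term equals
\[
\frac{1}{M}\Bigl(\sum_k\frac{\alpha_{k,d}^2}{q_{k,d}}-1\Bigr)\le \frac{\kappa_d}{NT}.
\]
Cauchy--Schwarz gives $\kappa_d\ge(\sum_k\alpha_{k,d})^2=1$. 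Applying the delta method once more to $\exp(\cdot)$ gives $\Var(\mathrm{RMM}_d)\le e^{2\bar L_d}\kappa_d/(NT)+o(1/(NT))$, and $e^{2\bar L_d}\le 1$. Finally,
\[
\Var\Bigl(\sum_d w_d\mathrm{RMM}_d\Bigr)\le\Bigl(\sum_d w_d\sqrt{\Var(\mathrm{RMM}_d)}\Bigr)^2
\]
by Cauchy--Schwarz on covariances. With $\sum_d w_d=1$, bounding each $\kappa_d$ by $\hat\kappa$ yields $O(\hat\kappa/(NT))=O((\hat N_{\mathrm{eff}}T)^{-1})$. This route needs no independence across feature dimensions.

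The main obstacle is making the delta-method step rigorous rather than asymptotic, because $\log\hat q$ is unbounded near zero. I would first try to control the remainder by splitting on the event $\mathcal E=\{\hat q_{k,d}\ge q_{k,d}/2 \text{ for all } k \text{ with } \alpha_{k,d}>0\}$. On $\mathcal E$, the second-order Taylor remainder is bounded by $O(\sum_k\alpha_k(\hat q_k-q_k)^2/q_k^2)$, whose second moment is $O(M^{-2})$. The complement $\mathcal E^c$ has probability $e^{-\Omega(M\min_k q_k)}$ by Chernoff bounds. Since $\mathrm{RMM}_d\in[0,1]$, this contributes negligibly to the variance. The constants hidden in $O(\cdot)$ then depend on $\min_{k:\alpha_k>0}q_{k,d}$, which is acceptable for an $O$-statement with $K$ and $q$ fixed. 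I would also remark that inter-agent correlations only inflate constants, consistent with Remark~\ref{rem:independence-assumption}.
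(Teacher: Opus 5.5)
Your proposal is correct and follows essentially the same route as the paper's proof. Both treat the $S=NT$ feature observations as a multinomial sample from $q_d$, linearize $\log\mathrm{RMM}_d=\sum_k\alpha_{k,d}\log\hat q_{k,d}$ to get variance $(\kappa_d-1)/(NT)$, use Cauchy--Schwarz for $\kappa_d\ge 1$, pass back through $\exp$ with $\mathrm{RMM}_d\le 1$, and aggregate over $d$ by sub-additivity of standard deviations with $\sum_d w_d=1$; your one addition is the good-event/Chernoff control of the Taylor remainder, which the paper omits (it simply writes $\approx$), and which makes explicit that the hidden constants depend on $\min_{k:\alpha_{k,d}>0}q_{k,d}$.
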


\begin{remark}
\label{rem:importance-sampling}
When $q_{k,d} = \alpha_{k,d}$ for all $k,d$ (perfect simulator), $\kappa_d = 1$ for every $d$, so $\hat{N}_{\mathrm{eff}} = N$ and we recover $O((NT)^{-1})$ scaling. The effective sample size $\hat{N}_{\mathrm{eff}}$ decreases as the worst-case mismatch across feature dimensions increases, reflecting the variance penalty when the simulator is biased relative to the ground truth.
\end{remark}

Secondly, we analyze a key comparison: how do MLOO and RLOO perform relative to each other? The critical insight is that MLOO achieves quadratic variance reduction with more rollouts, while RLOO's variance remains constant.

\begin{proposition}[Variance of MLOO and RLOO Estimators]
\label{prop:mloo-rloo-variance}
Under Assumption~\ref{ass:meta-metric-is}, let $\{\mathrm{RMM}_{-i}\}_{i=1}^N$ be leave-one-out meta-metric estimates where $\mathrm{RMM}_{-i}$ is computed using $N-1$ rollouts excluding rollout $i$. Let $\mathrm{RMM}_i$ denote the single-rollout realism meta-metric computed from rollout $\tau_i$ alone, i.e., evaluating \eqref{eq:RMM} with $N{=}1$; by Proposition~\ref{prop:var-meta-metric-is}, $\Var(\mathrm{RMM}_i) = O(1/T)$. Define:
\begin{align}
\mathrm{RMM}_i^{\text{MLOO}} &= \frac{1}{N} \sum_{j=1}^N \mathrm{RMM}_{-j} - \mathrm{RMM}_{-i}, \\
\mathrm{RMM}_i^{\text{RLOO}} &= \mathrm{RMM}_i - \frac{1}{N-1} \sum_{j \neq i} \mathrm{RMM}_j.
\end{align}
Then the variances satisfy:
\begin{align}
\Var(\mathrm{RMM}_i^{\text{MLOO}}) &= O\left(\frac{1}{N^2 T}\right), \\ 
\Var(\mathrm{RMM}_i^{\text{RLOO}}) &= O\left(\frac{1}{T}\right). 
\end{align}
\end{proposition}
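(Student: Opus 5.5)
The plan is to handle the two estimators separately. For MLOO, write $\mathrm{RMM}_{-i} = \Phi(\hat P_{-i})$, where $\hat P_{-i}$ is the collection of empirical bin histograms over the $N-1$ rollouts other than $i$. Writing $\hat P$ for the full-sample histogram and $\hat P^{(i)}$ for the single-rollout histogram of $\tau_i$, we have the exact identity
\[
\hat P_{-i} = \frac{N\hat P - \hat P^{(i)}}{N-1},
\qquad\text{so}\qquad
\hat P_{-i} - \hat P = \frac{\hat P - \hat P^{(i)}}{N-1}.
\]
Thus each leave-one-out histogram sits within $O(1/N)$ of the common center $\hat P$. I will take a first-order (delta-method) expansion of $\Phi$ around $\hat P$:
\[
\mathrm{RMM}_{-i} = \Phi(\hat P) + \frac{1}{N-1}\nabla\Phi(\hat P)^\top(\hat P - \hat P^{(i)}) + O(N^{-2}).
\]
The common term $\Phi(\hat P)$ cancels in $\frac1N\sum_j \mathrm{RMM}_{-j} - \mathrm{RMM}_{-i}$. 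Because $\frac1N\sum_j \hat P^{(j)} = \hat P$, the averaged linear term also vanishes. This leaves
\[
\mathrm{RMM}_i^{\mathrm{MLOO}} = \frac{1}{N-1}\nabla\Phi(\hat P)^\top\bigl(\hat P^{(i)} - \hat P\bigr) + O(N^{-2}).
\]
Taking variances then gives a factor $(N-1)^{-2}$ times $\Var\bigl(\nabla\Phi^\top(\hat P^{(i)}-\hat P)\bigr)$. This last quantity is the variance of a linear functional of a single rollout's histogram.

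Next I bound that single-rollout variance by $O(1/T)$, reusing the argument behind Proposition~\ref{prop:var-meta-metric-is} with $N=1$. Under Assumption~\ref{ass:meta-metric-is}.1 the $T$ per-timestep features are independent draws from $q$, so each bin frequency in $\hat P^{(i)}$ has variance $q_k(1-q_k)/T$. The gradient $\nabla\Phi$ is a weighted geometric-mean/log-likelihood structure, with components of order $\alpha_{k,d}/q_{k,d}$, evaluated near $q$. This gives a bound of order $\sum_d w_d^2\,\kappa_d/T \le \hat\kappa/T$, which is where the $\kappa_d$ constants enter. Here I will treat $\hat\kappa$, the weights $w_d$, $K$ and the feature count $D$ as constants, and the support condition (Assumption~\ref{ass:meta-metric-is}.4) keeps $\nabla\Phi$ finite. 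Combining the two steps gives $\Var(\mathrm{RMM}_i^{\mathrm{MLOO}}) = O(1/(N^2T))$.

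The main obstacle is making the Taylor step rigorous. $\Phi$ involves a product of empirical probabilities raised to $1/|V|$, i.e.\ the exponential of an average of $\log \hat P$, and it is singular when an empirical bin is empty. I would first control the remainder on the high-probability event that all bins in $\mathrm{supp}(\alpha)$ have empirical mass bounded below, say at least $q_k/2$. On this event $\Phi$ is smooth with bounded second derivatives, and the quadratic remainder contributes $O(\|\hat P-\hat P^{(i)}\|^2/N^2)$, which is of lower order. I would then argue that the complement event has exponentially small probability in $NT$ by a Chernoff bound, noting that the paper's RMM implementation uses smoothed histograms so $\Phi$ stays bounded there. If full rigor is too costly, I will state the result, as the paper's $O(\cdot)$ notation suggests, as a leading-order asymptotic statement.

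For RLOO, I will expand $\Var\bigl(\mathrm{RMM}_i - \frac{1}{N-1}\sum_{j\ne i}\mathrm{RMM}_j\bigr)$. Since the rollouts are i.i.d., this equals $\Var(\mathrm{RMM}_i)\bigl(1 + \frac{1}{N-1}\bigr)$. Proposition~\ref{prop:var-meta-metric-is} with $N=1$ gives $\Var(\mathrm{RMM}_i) = O(1/T)$, so the RLOO variance is $O(1/T)$, and this does not decay in $N$. This contrast with the $1/N^2$ factor for MLOO completes the statement.
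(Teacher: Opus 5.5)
Your proof is correct, but you reach the MLOO bound by a different route than the paper. The paper never linearizes the metric. It takes $\sigma^2=\Var(\mathrm{RMM}_{-i})=C_{\text{var}}/((N-1)T)$ from Proposition~\ref{prop:var-meta-metric-is} as a black box. It then \emph{posits} $\mathrm{Cov}(\mathrm{RMM}_{-i},\mathrm{RMM}_{-j})\approx\frac{N-2}{N-1}\sigma^2$, on the grounds that the two estimates share $N-2$ of their $N-1$ rollouts. Expanding the variance of $\frac1N\sum_{j\neq i}\mathrm{RMM}_{-j}-\frac{N-1}{N}\mathrm{RMM}_{-i}$ then gives $\sigma^2/N=C_{\text{var}}/(N(N-1)T)$.

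That covariance is exactly what holds when each $\mathrm{RMM}_{-i}$ is an average of i.i.d.\ per-rollout terms. So your exact identity $\hat P_{-i}-\hat P=(\hat P-\hat P^{(i)})/(N-1)$, plus the first-order expansion, supplies the justification the paper leaves implicit. It also yields the same constant: $\frac{1}{(N-1)^2}\cdot\frac{N-1}{N}\,v=\frac{v}{N(N-1)}$, with $v=\Var(\nabla\Phi(q)^\top\hat P^{(i)})=O(1/T)$.

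Your route buys three things. The cancellation happens at the level of random variables rather than in covariance bookkeeping. With your event-splitting, it yields a genuine $O(1/(N^2T))$ upper bound instead of an ``$\approx$''. And it shows that, to first order, $\mathrm{RMM}_i^{\mathrm{MLOO}}$ is a centered single-rollout influence term divided by $N-1$, which is exactly where the extra $N^{-2}$ relative to RLOO comes from. The paper's route is shorter and reuses Proposition~\ref{prop:var-meta-metric-is} modularly, but rests on an unproved covariance approximation. Your RLOO argument is the same as the paper's.

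Three small repairs:

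\textbf{Remainder term.} In your displayed expansion, write the remainder as $O(\|\hat P-\hat P^{(i)}\|^2/N^2)=O_p(1/(N^2T))$, as you do later, not as a bare $O(N^{-2})$. A bare $O(N^{-2})$ would add $O(N^{-4})$ (and a cross term of order $N^{-3}T^{-1/2}$) to the variance, which is not negligible against $1/(N^2T)$ once $T\gtrsim N^2$. The needed $1/T$ comes from the fourth moments of a single-rollout histogram under the time-independence part of Assumption~\ref{ass:meta-metric-is}.

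\textbf{Correlated features.} Features computed from the same rollout are correlated, so $\sum_d w_d^2\kappa_d/T$ is not the variance of the aggregated linear term. Instead use sub-additivity of the standard deviation, as the paper does when lifting Proposition~\ref{prop:var-meta-metric-is} to $\sum_d w_d\,\mathrm{RMM}_d$; this still gives the bound $\hat\kappa/T$.

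\textbf{Bad event.} You do not need smoothed histograms there. With $\sum_d w_d=1$, every $\mathrm{RMM}_{-j}\in[0,1]$, so $|\mathrm{RMM}_i^{\mathrm{MLOO}}|\le 1$. Since $\E[\mathrm{RMM}_i^{\mathrm{MLOO}}]=0$ by exchangeability, the bad event contributes at most its exponentially small probability to the variance.
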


MLOO's quadratic variance reduction versus RLOO's constant variance relative to inverse number of rollouts provides substantial advantages for RL training stability, provided $\hat{N}_{\mathrm{eff}}$ scales reasonably with the number of rollouts $N$. 
Even with moderate simulator-ground truth mismatch, this scaling difference translates to more stable policy gradient estimates in practice, as empirically validated in \secref{exp:alt_reward}.
 
\subsection{Controllability with Goal Conditioning}
\label{sec:controllability}

State-of-the-art traffic simulation world models aim to capture realistic driving behaviors by maximizing the RMM.
While this provides a multi-modal traffic simulation model that can generate rollouts over likely driving distributions, the specific behaviors of individual agents remain stochastic, and targeted scenarios cannot be explored.

To distill controllability from an existing simulator, we perform goal-conditioned fine-tuning (GCFT), enabling the simulation model to attend to externally provided goals.
We explore multiple ways of extending a model's observation to include goal states to find a balance between simulation controllability and realism.

\paragraph{Goal Definition.} We identify two specific goal criteria representing hard and soft goals during training.
In the case of the hard target, agent $i$ is considered to have \textit{reached} its goal, $\textbf{x}^i_g = (x^i_g, y^i_g)$ if its final displacement is within $2.0$ meters of the goal coordinate.
For the soft target, the agent is considered to have \textit{passed} its goal if it passes within 2.0 meters of the goal coordinate at any point during the simulation rollout.
Since goal coordinates are often related to map information, we also introduce the notion of a goal polyline, $P^i_g$, defined as the polyline whose origin is closest to the agent's specified goal coordinate:

\begin{equation}
    P^i_g = \argmin_{\textbf{m} \in \mathcal{M}} \norm{\textbf{m} -  \textbf{x}^i_g},
\end{equation}
where $\mathcal{M}$ represents the set of all map polyline points. The goal polyline does not change the target goal coordinate, but provides additional contextual information for goal-conditioned control.

\paragraph{Goal Representation.} 
We explore two methods for including goal information in the agent's observation: 
i) \textit{agent token embedding concatenation (cat)}, which directly appends continuous goal coordinates to the agent's state, and 
ii) \textit{positional encoding indication (ind)}, which extends the relative positional encoding between the agent and 
road tokens with a binary indicator for the goal polyline. Detailed implementations are provided in \appref{app:goal_architectures}.

\paragraph{Hindsight Experience Replay.} Reaching specific goals can be a rare event, leading to sparse rewards. We take inspiration from Hindsight Experience Replay (HER) to augment data samples with new goals that the model is capable of reaching.
Algorithm \ref{alg:goal_sampling} describes this process for a given dataset. 
Specifically, a group of rollouts is generated from the same history prior, $S_{<t}$. In each step of the rollout, trajectory tokens are selected by performing temperature sampling over the top 32 trajectory tokens, providing stochastic rollouts. After generating a group of rollouts, the best rollout based on the RMM is selected, and the terminal states for each agent in this rollout are used as alternate goals, $\hat{X}_g = \{\hat{\textbf{x}}^j_g\}_{j \in \mathcal{G}}$, to augment the dataset. 
When training on these samples, we update the observed states of the rollouts such that the original goal, $\textbf{x}^i_g$, is replaced with the alternative goal, $\hat{\textbf{x}}^i_g$, and recompute the policy ratio,
$\hat{r}_{i,t}(\theta)=\frac{\pi_{\theta}(S^i_{\geq t} \mid S^i_{<t}, \hat{\textbf{x}}^i_g, C)}{\pi_{\theta}(S^i_{\geq t} \mid S^i_{<t}, \textbf{x}^i_g, C)}$, following the hindsight policy gradient framework \cite{rauber2018hindsight}.
This improves sample efficiency by providing intermediary goals in challenging scenarios where agents may not be able to reach the ground-truth goal, and can be used for either soft or hard goals.

\begin{table}[!tp]
\centering
\caption{Traffic simulation benchmarking results. Results are based on WOSAC leaderboard\protect\footnotemark{} evaluation on the private test split. We also present the results for our reference model (SMART). $(\uparrow)$ indicates that larger values are better. \textbf{Bold} and \underline{underline} indicate the best and second best values, respectively\protect\footnotemark{}. $\dagger$ indicates our re-trained version of the reference model.}
\label{tab:waymo_results}
\maybeResize[1]{
  \begin{tabular}{l>{\columncolor[gray]{0.8}}cccc}
    \toprule
    \rowcolor{white!50}
    Model
    & RMM$\uparrow$
    & Kinematic$\uparrow$
    & Interactive$\uparrow$ 
    & Map-based$\uparrow$  \\
    \midrule
    TrafficBotsV1.5 \cite{zhang2024trafficbotsv15} & 0.7167 & 0.4304 & 0.7114 & 0.8871 \\
    VBD \cite{vbd2024} & 0.7375 & 0.4169 & 0.7819 & 0.8636 \\
    MVTE \cite{mvte2023} & 0.7469 & 0.4503 & 0.7706 & 0.8859 \\
    Trajeglish \cite{philion2024trajeglish} & 0.7409 & 0.4166 & 0.7845 & 0.8703 \\
    KiGRAS \cite{kigras2025} & 0.7761 & 0.4691 & 0.8064 & 0.9126 \\
    DRoPE-Traj \cite{zhao2025drope} & 0.7786 & 0.4779 & 0.8065 & 0.9144 \\
    GUMP \cite{gump2024} & 0.7596 & 0.4780 & 0.7887 & 0.8832 \\
    BehaviorGPT \cite{Zhou2024BehaviorGPT} & 0.7637 & 0.4333 & 0.7997 & 0.9064 \\
    UniMM \cite{lin2025revisitmixturemodelsmultiagent} & 0.7839 & 0.4914 & 0.8089 & 0.9188 \\
    TrajTok \cite{zhang2025trajtok} & \underline{0.7861} & 0.4887 & \underline{0.8116} & \textbf{0.9231} \\
    \midrule
    SMART-tiny \cite{Wu2024SMART} & 0.7755 & 0.4759 & 0.8039 & 0.9102 \\
    SMART-tiny \cite{Wu2024SMART} (ref. model)$^{\dagger}$ & 0.7824 & 0.4854 & 0.8089 & 0.9180 \\
    SMART-tiny CAT-K \cite{Zhang2024ClosedLoop} & 0.7856 & \textbf{0.4931} & 0.8106 & 0.9205 \\
    \textbf{RLFTSim (ours)} & \textbf{0.7867} & \underline{0.4927} & \textbf{0.8129} & \underline{0.9210} \\
    \bottomrule
  \end{tabular}
}
\end{table}
\addtocounter{footnote}{-1}
\footnotetext{For consistent comparison, we present the results for all models using the meta-metric weights of v2025. Since the traffic light violation likelihood was not present in the previous years, we assume this metric is 1 for all models.}
\stepcounter{footnote}
\footnotetext{The bolding and underlining are done assuming that the private test set results have standard error values in a similar range as the validation set results in \tabref{tab:reward_ablation}.}
 
\paragraph{Realism Alignment.} The objective of performing GCFT is to distill simulation controllability while keeping the realistic simulation behavior. The per-rollout reward combines MLOO with a goal-reaching signal:
\begin{equation}
R_i^{\text{GCFT}} = (1-\lambda) \, \mathrm{RMM}_i^{\text{MLOO}} + \lambda \, R_i^{\text{goal}},
\label{eq:gcft_reward}
\end{equation}
where $R_i^{\text{goal}}$ is the mean binary goal-reaching reward over evaluated agents and $\lambda \in [0,1]$ balances realism and controllability (\appref{app:implementation_details}). This combined reward is used in the same REINFORCE framework as the goal-free case (\secref{sec:mloo}).

\begin{algorithm}[!tp]
    \caption{Stochastic Target Augmentation}\label{alg:goal_sampling}
    
    \textbf{Input:} 
    Scenario set $\mathcal{D}$, policy model $\pi_{\theta}$, group size $N_G$.

    \textbf{Output:} 
    Augmented Dataset $\mathcal{D}^*$.
    
    \begin{algorithmic}[1]
    \State Initialize augmented dataset $\mathcal{D}^*$
    \For{$(X_g, C, S_{<t}) \sim \mathcal{D}$ } \Comment{Get goals, context, history}
    \For{$i=1, \cdots, N_G$} \Comment{Sample $N_G$ rollouts} 
    \State $\tau_i \gets \pi_{\theta}(X_g, C, S_{<t})$ 
    \EndFor
    
    \State $\tau^* \gets \argmax_{\tau \in \{ \tau_j \}_{j=1}^{N_G}}
    \mathrm{RMM}(\tau)$. \Comment{eq. \ref{eq:RMM}}
    \State $\hat{X}_g \gets \tau^*_T$ \Comment{Get terminal state of best rollout}
    \State Store $(\hat{X}_g, C, S_{<t}, S_{\geq t})$ in $\mathcal{D}^*$

    \EndFor
    
    \State \Return $\mathcal{D}^*$.
    \end{algorithmic}
\end{algorithm}

\section{Experiments}
\label{experiments}

In this section, first we discuss the experimental setup (\secref{exp:setup}), then we present the results of our experiments to answer several research questions: 
\textbf{RQ1:} Is reinforcement learning-based post-training with the proposed meta-metric Leave-One-Out reward signal effective in enhancing the realism of simulated traffic rollouts? 
(\secref{exp:sim_realism}) 
\textbf{RQ2:} Can we use the self-supervised reconstruction error objective to enhance the realism of simulated scenarios? (\secref{exp:alt_reward}) 
\textbf{RQ3:} Is there an effective approach to condition rollouts on specific goals? (\secref{exp:goal_completion})
\textbf{RQ4:} Can we distill behavior controllability into a base simulation model with a proper reward design? (\secref{exp:goal_completion}, \appref{app:controllability})

\subsection{Experimental Setup}
\label{exp:setup}
We use the Waymo Open Motion Dataset  as it provides the standard benchmark for WOSAC evaluation and contains large-scale real-world driving trajectories necessary for multi-agent simulation.
The SMART-tiny base model is trained for 32 epochs on WOMD following \cite{Wu2024SMART}.
For the RLFT step, we use the learning rate 3e-6, target KL divergence of 0.01 nats, 4 rollouts, and a batch size of 8.
For the evaluation, the number of rollouts is set to 32 following the original configuration of the WOSAC's meta-metric.
A more detailed discussion of implementation details is given in  \appref{app:implementation_details}.

\subsection{Simulation Realism Benchmarking}
\label{exp:sim_realism}
The evaluation results for the simulation challenge are provided in \tabref{tab:waymo_results}. 
We use SMART-tiny as our base model \cite{Wu2024SMART}.
Then, after RL-based fine-tuning for 1 epoch, we get the SMART-RLFTSim model.
Compared to the base model, we can see that the realism meta-metric across all dimensions has improved (bottom row).
Our model achieves state-of-the-art performance in realism meta-metric (the primary metric) and the interactive metric, including over SMART-tiny CAT-K \cite{Zhang2024ClosedLoop}, another fine-tuning method that also uses SMART-tiny as its base model.
RLFTSim in \tabref{tab:waymo_results} uses $\mathrm{RMM}^{\mathrm{MLOO}}$ (\secref{sec:mloo}) as the reward signal for goal-free RL fine-tuning.

\begin{table}[!tp]
\centering
\caption{Ablation study on the reward function on the full validation set. Standard errors are shown in parentheses.}
\label{tab:reward_ablation}
\maybeResize[1]{
  \begin{tabular}{l>{\columncolor[gray]{0.8}}ccccc}
\toprule
\rowcolor{white!50}
Reward
& RMM$\uparrow$
& Kinematic$\uparrow$
& Interactive$\uparrow$
& Map-based$\uparrow$
& minADE$\downarrow$ \\
\midrule
  SMART-tiny \cite{Wu2024SMART} (ref. model) & 0.7804 \scriptsize{(3.2e-4)} & 0.4904 \scriptsize{(5.2e-4)} & 0.8032 \scriptsize{(4.1e-4)} & 0.9167 \scriptsize{(5.6e-4)} & \textbf{1.3016} \scriptsize{(4.2e-3)} \\
  \cdashline{1-6}
  $\text{minADE}^{\text{RLOO}}$ & 0.7801 \scriptsize{(3.3e-4)} & 0.4897 \scriptsize{(5.2e-4)} & 0.8032 \scriptsize{(4.1e-4)} & 0.9161 \scriptsize{(5.8e-4)} & 1.3202 \scriptsize{(4.5e-3)} \\
  $\text{RMM}^{\text{RLOO}}$ & 0.7821 \scriptsize{(3.3e-4)} & 0.4913 \scriptsize{(5.1e-4)} & 0.8065 \scriptsize{(4.2e-4)} & 0.9169 \scriptsize{(6.0e-4)} & 1.3229 \scriptsize{(4.4e-3)} \\
  $\text{RMM}^{\text{MLOO}}$ & \textbf{0.7830} \scriptsize{(3.3e-4)} & \textbf{0.4924} \scriptsize{(5.0e-4)} & \textbf{0.8070} \scriptsize{(4.1e-4)} & \textbf{0.9182} \scriptsize{(5.7e-4)} & 1.3150 \scriptsize{(4.4e-3)} \\
  Col.+Off.+ADE & 0.7803 \scriptsize{(3.3e-4)} & 0.4896 \scriptsize{(5.2e-4)} & 0.8039 \scriptsize{(4.1e-4)} & 0.9162 \scriptsize{(5.9e-4)} & 1.3313 \scriptsize{(4.5e-3)} \\
  Collision+Offroad & 0.7786 \scriptsize{(3.5e-4)} & 0.4891 \scriptsize{(5.2e-4)} & 0.8037 \scriptsize{(4.2e-4)} & 0.9117 \scriptsize{(6.4e-4)} & 1.3461 \scriptsize{(4.3e-3)} \\
\bottomrule
  \end{tabular}
}
\end{table}

\subsection{Alternative Reward Signals for Realism Enhancement}
\label{exp:alt_reward}
Here, we study whether using an imitation-based reward objective from the pre-training stage is effective for realism alignment.
We use the minimum Average Distance Error (minADE) as a measure of how closely the simulated rollouts align with the ground-truth trajectories.
Another imitation-based reward objective is the cross-entropy between the predicted trajectories and the ground-truth trajectories in token space. 
However, since tokenization introduces error, we opt to use the minADE metric with the RLOO formulation as the reward signal for this experiment.
The results are given in \tabref{tab:reward_ablation}; directly optimizing the meta-metric via $\text{RMM}^{\text{MLOO}}$ is more effective for realism alignment than the imitation-based minADE reward.
We also conduct a post-training experiment with meta-metric as the signal and the RLOO formulation.
Both the MLOO and RLOO reward signals are effective in realism enhancement, with MLOO slightly outperforming RLOO.
We also experiment with using heuristic reward functions using two combinations of collision rate, off-road rate, and minADE values following \cite{gulino2024waymax}.
Although these functions are effective in optimizing their own objectives, they do not lead to an effective improvement in realism at the near-optimal stage.
A detailed comparison including per-scenario paired t-tests is provided in \appref{app:heuristic_rewards}, confirming that $\text{RMM}^{\text{MLOO}}$ significantly outperforms the other reward formulations.

\paragraph{Empirical Reward Variance Scaling Analysis.} We empirically validate our theoretical findings on the variance scaling properties of MLOO and RLOO reward signals by analyzing the reward variance as a function of the number of rollouts ($N$). Using the pre-trained SMART-tiny model, we compute reward statistics for a varying number of rollouts. The results, presented in \figref{fig:var_scaling}, show that MLOO not only yields significantly lower reward variance values but also exhibits a variance that scales according to $1/N^2$. To visually confirm this trend, we fit a $f(N)=\alpha/N^2$ curve to the MLOO variance data. In contrast, the RLOO reward's variance does not scale effectively with the number of rollouts, plateauing for larger values of $N$. This confirms that MLOO provides a more stable and scalable reward signal, which is crucial for efficient fine-tuning.

\begin{table}[!tp]
    \centering
    
      \caption{Effect of goal representation and reward criterion on controllability and realism, evaluated on the full WOMD validation split. \textbf{Bold} and \underline{underline} indicate the best and second best values, respectively. Standard errors are shown in parentheses.}
      \label{tab:goal_completion}
    
    \maybeResize[1]{

      \begin{tabular}{lccccc}
        \toprule
        (Goal rep., Goal criterion) &  Passing Miss Rate $\downarrow$ & Kinematic $\uparrow$  & Interactive $\uparrow$ & Map-Based $\uparrow$ & RMM $\uparrow$ \\
        \midrule
        Goal-Free (RLFTSim) & 16.631 \scriptsize{(9.8e-2)} & \textbf{0.4924} \scriptsize{(5.0e-4)} & \textbf{0.8070} \scriptsize{(4.1e-4)} & \textbf{0.9182} \scriptsize{(5.7e-4)} &  \textbf{0.7830} \scriptsize{(3.3e-4)} \\
        \cmidrule(lr){1-6}
        (Concatenation, Soft) &  \underline{10.473} \scriptsize{(6.5e-2)} & 0.4794  \scriptsize{(4.9e-4)} & 0.8045 \scriptsize{(4.1e-4)} & 0.9134 \scriptsize{(6.0e-4)} & 0.7776 \scriptsize{(3.3e-4)}  \\
        (Concatenation, Hard) & 14.978  \scriptsize{(9.1e-2)} & 0.4791 \scriptsize{(4.9e-4)} & 0.8045 \scriptsize{(4.1e-4)} & 0.9129 \scriptsize{(6.1e-4)} & 0.7774 \scriptsize{(3.3e-4)} \\
        (Indication, Soft) & \;\textbf{9.180}  \scriptsize{(5.9e-2)} & 0.4887 \scriptsize{(4.9e-4)} & \underline{0.8068} \scriptsize{(4.1e-4)} & 0.9175 \scriptsize{(5.7e-4)} & 0.7819 \scriptsize{(3.2e-4)} \\
        (Indication, Hard) & 13.393  \scriptsize{(8.2e-2)} & \underline{0.4916} \scriptsize{(5.1e-4)} & \underline{0.8068} \scriptsize{(4.2e-4)} & \underline{0.9179} \scriptsize{(5.8e-4)} & \underline{0.7827} \scriptsize{(3.4e-4)} \\
        \bottomrule
      \end{tabular}
    }
\end{table}
\begin{figure}[!tp]
\centering
  \includegraphics[trim={0 0 0 1.2cm}, clip, width=0.95\linewidth]{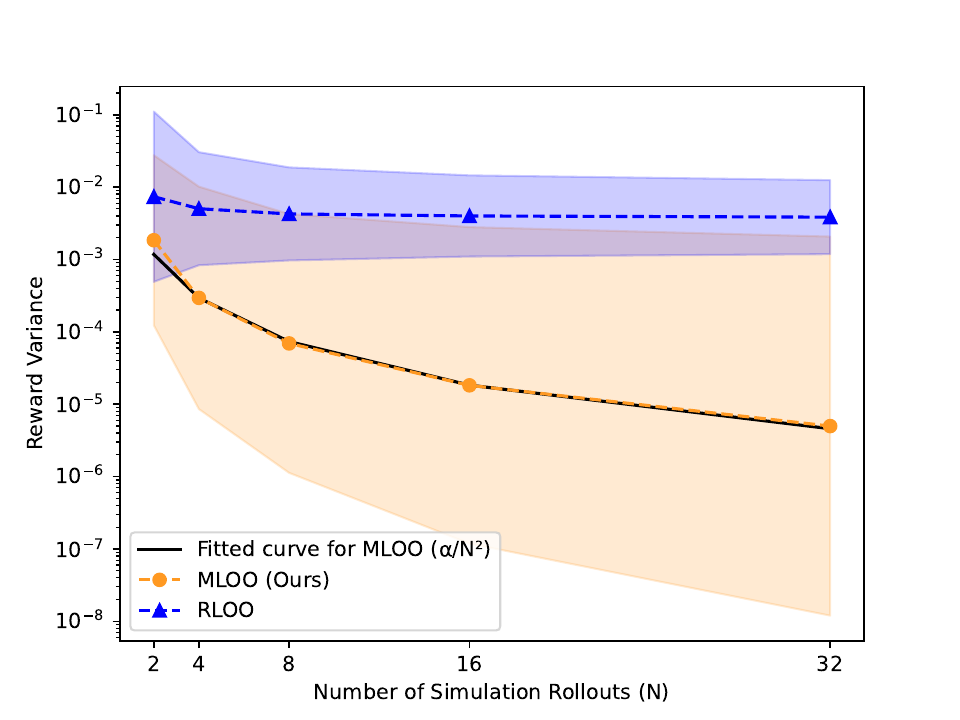}
  \caption{Empirical reward variance of MLOO and RLOO on the validation set, computed over rollouts per scenario for varying $N$. Shaded regions represent $\pm 1$ std.\ in log space.}
\label{fig:var_scaling}

\end{figure}

\begin{figure*}[!tph]
    \centering
    \includegraphics[width=\linewidth]{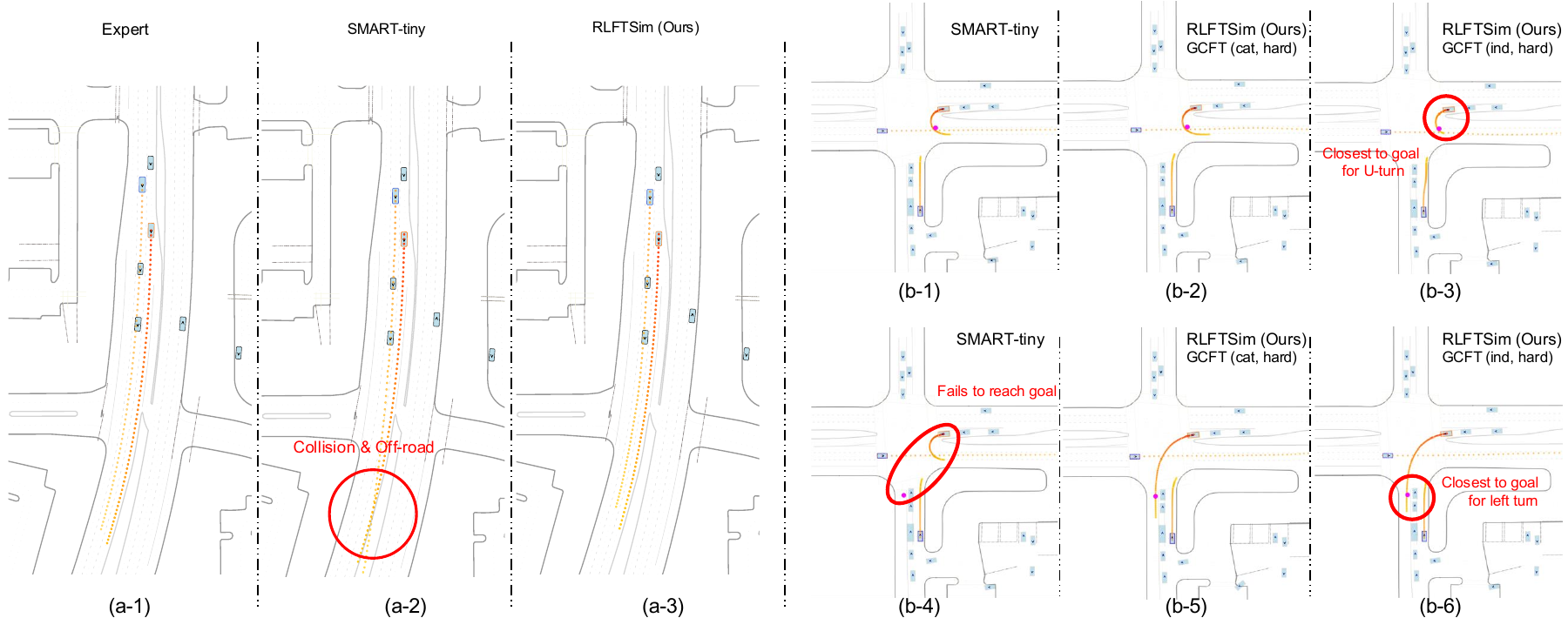}
    \caption{\textbf{Qualitative Evidence of RLFTSim Effectiveness.} 
    \textbf{(a) Realism Enhancement:} Comparison of baseline SMART-tiny (a-2) vs. RLFTSim (a-3) on a challenging intersection scenario. The baseline model generates unrealistic off-road behavior (red trajectory) and a collision with cross-traffic, while RLFTSim produces realistic lane-following behavior that respects traffic rules. 
    \textbf{(b) Controllability Distillation:} Two sets of realistic simulation rollouts of the fine-tuned model are shown for a fixed seed scenario, conditioned on different goal points (magenta points): a U-turn goal (top row) and a left-turn goal (bottom row). In GCFT rollouts, the ego vehicle, depicted with an orange border, achieves the goal point while maintaining realistic interactions with other agents.}
    \label{fig:qualitative1}
\end{figure*}

\subsection{Goal-Conditioned Controllability}
\label{exp:goal_completion}
We train four GCFT variants on the realism-aligned RLFTSim model (\secref{exp:sim_realism}), crossing two goal representations, \textit{concatenation} and \textit{indication} (\secref{sec:controllability}), with two goal-reaching criteria, soft and hard.
Ground-truth final positions serve as goals evaluated via the WOMD miss rate definition~\cite{WOMD_MF}.

As shown in \tabref{tab:goal_completion}, all GCFT methods improve goal passing over the baseline. Indication-based observation maintains higher realism than concatenation, and the soft goal reward yields the best passing rates across both observation approaches.

In \appref{app:controllability},
we further analyze the controllability of the GCFT models by proposing kinematic perturbation and alternative maneuver benchmarks, 
where we evaluate the ability of models to reach goals different from the ground-truth end states. 
In these benchmarks, our GCFT models outperform the baseline across different driving behaviors.

\subsection{Qualitative Analysis}

\figref{fig:qualitative1} demonstrates the effectiveness of both RLFTSim components. Panel (a) shows a representative failure case where the baseline SMART-tiny (a-2) generates unrealistic off-road behavior and a collision with cross-traffic, while RLFTSim (a-3) produces realistic lane-following that respects traffic rules. Panel (b) illustrates controllability distillation: identical initial conditions produce distinct realistic behaviors when conditioned on different goals (U-turn vs. left-turn), validating that GCFT enables goal-directed simulation without compromising realism. These examples illustrate our quantitative improvements in \tabref{tab:waymo_results} and \tabref{tab:goal_completion}. 
Additional qualitative examples are available in \appref{app:qualitative_samples}, and the corresponding videos are available on the project page.

\section{Conclusion}
\label{conclusion}

We present RLFTSim, an RL fine-tuning framework that enhances realism in traffic simulation and enables goal-conditioned scenario generation.
Our approach leverages the realism meta-metric as a reward signal and introduces Meta-metric Leave-One-Out to provide low-variance, dense rewards for sample-efficient training.
Evaluated on the Waymo Open Motion Dataset, RLFTSim achieves state-of-the-art performance across realism metrics while consuming significantly fewer samples due to MLOO's dense reward signal compared to supervised and closed-loop fine-tuning baselines.
We employ Hindsight Experience Replay for goal-conditioned controllability distillation, enabling flexible scenario generation while maintaining realism.

\paragraph{Limitations and Future Work.} The use of token-based representation, which chunks trajectories into half-second segments, potentially reduces responsiveness in highly dynamic traffic scenarios. Goal-conditioned fine-tuning achieves reasonable but imperfect controllability, with room for improvement in goal-reaching rates.
Moreover, RMM is a proxy for realism, and its saturation may partly reflect inadequacies in the metric itself rather than true convergence of simulator quality; developing better realism metrics would further benefit RL-based alignment. As a direction for future work, the MLOO formulation is applicable to other population-based metrics beyond RMM.

{
    \small
    \bibliographystyle{ieeenat_fullname}
    \bibliography{references}
}

\ifdefined\ARXIVVERSION
\maketitlesupplementary
\appendix

\renewcommand{\theequation}{S\arabic{equation}}
\renewcommand{\thefigure}{S\arabic{figure}}
\renewcommand{\thetable}{S\arabic{table}}
\renewcommand{\thetheorem}{S\arabic{theorem}}
\setcounter{equation}{0}
\setcounter{figure}{0}
\setcounter{table}{0}
\setcounter{theorem}{0}

\section{Methodological Details}

\subsection{Detailed RMM Formulation}
\label{app:rmm_derivation}

We provide the complete derivation of the WOSAC's Realism Meta-Metric (RMM) \cite{Montali2023neurips_wosac} that was summarized in \secref{background}.

To compute the WOSAC meta-metric, let $\{\tau_i\}_{i=1}^{N}$ be $N=32$ simulator rollouts sharing the same history and map context, each of length $T$ time steps, and let $\tau^*$ denote the corresponding ground-truth trajectory. 
For each rollout $\tau_i$ and each timestep $t\in\{1,\dots,T\}$, we extract a $D$-dimensional feature vector:

\begin{equation}
\mathbf{f}^{(i)}_t = \bigl(f^{(i)}_{1,t},\,f^{(i)}_{2,t},\,\dots,\,f^{(i)}_{D,t}\bigr),
\end{equation}
whose components include:
\begin{itemize}
    \item \textbf{Kinematic features:} linear/angular speed and acceleration
    \item \textbf{Interactive features:} closest distance to other agents, time-to-collision (TTC), and accident indication  
    \item \textbf{Map-based features:} distance to road boundary, off-road indication, and traffic light violation
\end{itemize}

We compute the same per-timestep features $\mathbf{f}^*_t$ for the ground-truth trajectory.
Each feature dimension $d$ is discretized into bins $\{\mathcal{B}_{d,a,k}\}_{k=1}^K$, where $a$ is the agent index, and $K=20$ is the number of bins.

Given the seed scenario and its group of simulated rollouts, we first form \textbf{time-dependent} empirical distributions:
\begin{equation}
\hat P_{d,a,t}(k) = \frac{1}{N} \sum_{i=1}^N \mathbf{1}\bigl\{f^{(i)}_{d,a,t}\in \mathcal{B}_{d,a,k}\bigr\},
\end{equation}
where $\mathbf{1}\{\cdot\}$ is the indicator function.

Then, we marginalize time to obtain a \textbf{time-independent} histogram:
\begin{equation}
\hat P_{d,a}(k)=\frac{1}{T}\sum_{t=1}^T \hat P_{d,a,t}(k).
\end{equation}

Finally, letting $f^*_{d, a,t}$ fall into bin $k^*_{d,a,t}$ when observed on the ground truth, the WOSAC realism meta-metric is defined as a weighted sum of per-dimension geometric means:
\begin{equation}
\mathrm{RMM} = \sum_{d=1}^D w_d \left[\prod_{(a,t_a) \in V} \hat P_{d,a}(k^*_{d,a,t_a}) \right]^{\frac{1}{|V|}},
\end{equation}
where each weight $w_d\ge0$ reflects the relative importance of feature $d$, and $V = \{(a,t_a); a \in \text{eval. agents}, t_a \in \text{valid time steps} \}$. 
Larger values of RMM indicate that the simulator's distribution of kinematic, interactive, and map-based features more closely aligns with real-world behavior.

\subsection{Proofs and Mathematical Derivations}
\label{app:proofs}

\allowdisplaybreaks 

\begin{proof}[Proof of Proposition~\ref{prop:mloo_pg}]
Starting from the definition of $g$,
\begin{align}
\begin{aligned}
\mathbb{E}[g]
&=
\sum_{i=1}^{N}
\mathbb{E}\Bigg[
\nabla_\theta \log \pi_\theta(\tau_i) \\
&\qquad \times
\left(
\frac{1}{N}\sum_{j=1}^{N}\mathrm{RMM}_{-j}-\mathrm{RMM}_{-i}
\right)
\Bigg].
\end{aligned}
\end{align}
Expanding the two terms gives
\begin{align}
\begin{aligned}
\mathbb{E}[g]
&=
\frac{1}{N}\sum_{j=1}^{N}
\mathbb{E}\!\left[
\left(\sum_{i=1}^{N}\nabla_\theta \log \pi_\theta(\tau_i)\right)\mathrm{RMM}_{-j}
\right] \\
&\quad -
\sum_{i=1}^{N}
\mathbb{E}\!\left[
\nabla_\theta \log \pi_\theta(\tau_i)\,\mathrm{RMM}_{-i}
\right].
\end{aligned}
\end{align}

\proofstep{Step 1: The leave-one-out subtraction term has zero expectation.}
Fix any $i\in\{1,\dots,N\}$. Since $\mathrm{RMM}_{-i}$ is computed from $\tau_{-i}$, it
depends only on the rollout set excluding $\tau_i$. Because the rollouts are
sampled i.i.d., $\tau_i$ is independent of $\tau_{-i}$. Therefore, by the tower
property,
\begin{align}
\begin{aligned}
\mathbb{E}\!\left[ \nabla_\theta \log \right. & \left. \! \pi_\theta(\tau_i)\,\mathrm{RMM}_{-i} \right] = {} \\
& \mathbb{E}_{\tau_{-i}}\!\left[
\mathrm{RMM}_{-i}\,
\mathbb{E}_{\tau_i}\!\left[\nabla_\theta \log \pi_\theta(\tau_i)\right]
\right].
\end{aligned}
\end{align}
Using the score-function identity
\begin{align}
\begin{aligned}
\mathbb{E}_{\tau_i\sim \pi_\theta}\!\left[\nabla_\theta \log \pi_\theta(\tau_i)\right]
&=
\int \pi_\theta(\tau_i)\nabla_\theta \log \pi_\theta(\tau_i)\,d\tau_i \\
&=
\nabla_\theta \int \pi_\theta(\tau_i)\,d\tau_i =0,
\end{aligned}
\end{align}
we obtain
\begin{align}
\mathbb{E}\!\left[
\nabla_\theta \log \pi_\theta(\tau_i)\,\mathrm{RMM}_{-i}
\right]
=0.
\end{align}
Since this holds for every $i$,
\begin{align}
\sum_{i=1}^{N}
\mathbb{E}\!\left[
\nabla_\theta \log \pi_\theta(\tau_i)\,\mathrm{RMM}_{-i}
\right]
=0.
\end{align}
Hence,
\begin{align}
\begin{aligned}
\mathbb{E}[g]
&=
\frac{1}{N}\sum_{j=1}^{N}
\mathbb{E}\!\left[
\left(\sum_{i=1}^{N}\nabla_\theta \log \pi_\theta(\tau_i)\right)\mathrm{RMM}_{-j}
\right].
\end{aligned}
\end{align}

\proofstep{Step 2: Apply the score-function identity to the joint distribution.}
Because the joint rollout distribution factorizes as
$\pi_\theta(\tau_{1:N})=\prod_{i=1}^{N}\pi_\theta(\tau_i)$, we have
\begin{align}
\sum_{i=1}^{N}\nabla_\theta \log \pi_\theta(\tau_i)
=
\nabla_\theta \log \pi_\theta(\tau_{1:N}).
\end{align}
Therefore,
\begin{align}
\begin{aligned}
\mathbb{E}[g]
&=
\frac{1}{N}\sum_{j=1}^{N}
\mathbb{E}\!\left[
\nabla_\theta \log \pi_\theta(\tau_{1:N})\,\mathrm{RMM}_{-j}
\right].
\end{aligned}
\end{align}
Now fix $j$. Writing
$\nabla_\theta \log \pi_\theta(\tau_{1:N})
= \nabla_\theta \log \pi_\theta(\tau_j)
+ \nabla_\theta \log \pi_\theta(\tau_{-j})$,
the $\tau_j$ component contributes zero by the same argument as
Step~1 (independence of $\tau_j$ and $\mathrm{RMM}_{-j}$).
The remaining score $\nabla_\theta \log \pi_\theta(\tau_{-j})$
and $\mathrm{RMM}_{-j}$ both depend only on $\tau_{-j}$, so the
score-function identity on the marginal $\pi_\theta(\tau_{-j})$ gives
\begin{align}
\begin{aligned}
\mathbb{E}\!\left[
\nabla_\theta \log \pi_\theta(\tau_{1:N})\,\mathrm{RMM}_{-j}
\right]
&=
\nabla_\theta \mathbb{E}[\mathrm{RMM}_{-j}].
\end{aligned}
\end{align}
Substituting this into the previous display yields
\begin{align}
\begin{aligned}
\mathbb{E}[g]
&=
\frac{1}{N}\sum_{j=1}^{N}\nabla_\theta \mathbb{E}[\mathrm{RMM}_{-j}] \\
&=
\nabla_\theta \mathbb{E}\!\left[\frac{1}{N}\sum_{j=1}^{N}\mathrm{RMM}_{-j}\right].
\end{aligned}
\end{align}

\proofstep{Step 3: Relate the objective to expected RMM on $N-1$ rollouts.}
By exchangeability, each $\mathrm{RMM}_{-j}$ has the same distribution as the meta-metric
computed on $N-1$ i.i.d. rollouts. Hence,
\begin{align}
\mathbb{E}\!\left[\frac{1}{N}\sum_{j=1}^{N}\mathrm{RMM}_{-j}\right]
=
\mathbb{E}\!\left[\mathrm{RMM}(\tau_{1:N-1})\right].
\end{align}
This proves the claim.
\end{proof}

\begin{proof}[Proof of Proposition~\ref{prop:var-meta-metric-is}]
    The full RMM~\eqref{eq:RMM} is a finite weighted sum $\sum_{d=1}^D w_d\,\mathrm{RMM}_d$ over $D$ feature dimensions. We first establish the $O((N_{\mathrm{eff}}\cdot T)^{-1})$ bound for each per-feature component $\mathrm{RMM}_d = \prod_{k=1}^K \widetilde{P}_k^{\alpha_k}$, written as $\mathrm{RMM}$ for brevity, then lift to the aggregate (Step~6).
    We proceed in six steps.

    \proofstep{Step 1: Direct multinomial sampling estimator.}
    Fix feature $d$ and drop subscripts for clarity. Let $S = NT$ be the total sample size from $N$ rollouts of length $T$. The simulator generates $S$ independent feature observations $\{X_\ell\}_{\ell=1}^S$ directly from the simulator distribution $q$, where each $X_\ell$ falls in bin $k$ with probability $q_k$. The empirical histogram estimator is
    \begin{align}
    \widetilde{P}_k = \frac{1}{S} \sum_{\ell=1}^S \mathbf{1}\{X_\ell = k\}.
    \end{align}
    This estimator directly measures the empirical frequency of simulator features in bin $k$.

    \proofstep{Step 2: Unbiasedness.}
    Since samples are drawn from the simulator distribution $q$, we have $\E_{X_\ell \sim q}[\mathbf{1}\{X_\ell = k\}] = q_k$. By linearity of expectation,
    \begin{align}
    \E_{X_\ell \sim q}\left[\widetilde{P}_k\right] = \frac{1}{S} \sum_{\ell=1}^S \E_{X_\ell \sim q}[\mathbf{1}\{X_\ell = k\}] = q_k.
    \end{align}

    \proofstep{Step 3: Variance and covariance of the estimator.}
    Since samples are independent and drawn from a multinomial distribution with parameters $(S, q_1, \ldots, q_K)$, the variance is:
    \begin{align}
    \Var_{X_\ell \sim q}[\widetilde{P}_k] &= \frac{1}{S} \cdot \Var_{X_\ell \sim q}[\mathbf{1}\{X_\ell = k\}] \\
    &= \frac{1}{S} q_k (1 - q_k) \\
    &= \frac{q_k (1 - q_k)}{S}.
    \end{align}

    For the covariance structure, consider $i \neq j$:
    \begin{align}
    \text{Cov}[\widetilde{P}_i, \widetilde{P}_j] &= \text{Cov}\left[\frac{1}{S} \sum_{\ell=1}^S \mathbf{1}\{X_\ell = i\}, \frac{1}{S} \sum_{\ell=1}^S \mathbf{1}\{X_\ell = j\}\right] \label{eq:cov-def} \\
    &= \frac{1}{S} \text{Cov}[\mathbf{1}\{X = i\}, \mathbf{1}\{X = j\}] \label{eq:cov-linearity} \\
    &= \frac{1}{S} \E[\mathbf{1}\{X = i\} \mathbf{1}\{X = j\}] \notag \\
    &\quad - \frac{1}{S}\E[\mathbf{1}\{X = i\}]\E[\mathbf{1}\{X = j\}] \label{eq:cov-expansion} \\
    &= 0 - \frac{1}{S} q_i q_j \label{eq:cov-mutual-exclusion} \\
    &= -\frac{q_i q_j}{S} \label{eq:cov-final}
    \end{align}
    where \eqref{eq:cov-mutual-exclusion} uses $\mathbf{1}\{X = i\} \mathbf{1}\{X = j\} = 0$ for $i \neq j$.

    \proofstep{Step 4: Variance inflation and effective sample size.}
    The variance of each bin estimator depends on how the simulator distribution $q$ matches the ground truth frequencies $\alpha$. Define the variance inflation factor $\kappa = \sum_{k=1}^K \frac{\alpha_k^2}{q_k}$, which measures the mismatch between ground truth frequencies $\alpha$ and simulator distribution $q$. Here we can derive the following bounds on $\kappa$:

    \textbf{Lower bound:} By Cauchy-Schwarz inequality with vectors $\mathbf{u} = \left(\frac{\alpha_1}{\sqrt{q_1}}, \ldots, \frac{\alpha_K}{\sqrt{q_K}}\right)$ and $\mathbf{v} = (\sqrt{q_1}, \ldots, \sqrt{q_K})$:
    \begin{align}
    \left(\sum_{k=1}^K \frac{\alpha_k}{\sqrt{q_k}} \cdot \sqrt{q_k}\right)^2 \leq \left(\sum_{k=1}^K \frac{\alpha_k^2}{q_k}\right)\left(\sum_{k=1}^K q_k\right).
    \end{align}

    Since $\sum_{k=1}^K \alpha_k = \sum_{k=1}^K q_k = 1$, we get:
    \begin{align}
    1^2 \leq \kappa \cdot 1 \quad \Rightarrow \quad \kappa \geq 1. \label{eq:lower-bound-kappa}
    \end{align}

    Equality holds when $\alpha_k = q_k$ for all $k$ (perfect simulator). When the simulator is biased, $\kappa > 1$.

    \textbf{Upper bound:} Since $\alpha_k \leq 1$ and $q_k > 0$ for all $k$ in the support, we have:
    \begin{align}
    \kappa = \sum_{k=1}^K \frac{\alpha_k^2}{q_k} \leq \frac{1}{\min_{k: q_k > 0} q_k}.
    \end{align}

    The effective sample size is defined as $N_{\mathrm{eff}} = \frac{N}{\kappa}$, giving us the bounds:
    \begin{align}
    N \cdot \min_{k: q_k > 0} q_k \leq N_{\mathrm{eff}} \leq N.
    \end{align}

    \proofstep{Step 5: Meta-metric variance.}
  The meta-metric has the form $\mathrm{RMM} = \prod_{k=1}^K \widetilde{P}_k^{\alpha_k}$ where $\alpha_k$ are fixed ground truth frequencies and $\sum_{k=1}^K \alpha_k = 1$. Taking logarithms:
    \begin{align}
    \log(\mathrm{RMM}) = \sum_{k=1}^K \alpha_k \log(\widetilde{P}_k).
    \end{align}

    By the first-order delta method, since $\Var[\log(\widetilde{P}_k)] \approx \frac{\Var[\widetilde{P}_k]}{(\E[\widetilde{P}_k])^2} = \frac{\Var[\widetilde{P}_k]}{q_k^2}$ and $\text{Cov}[\log(\widetilde{P}_i), \log(\widetilde{P}_j)] \approx \frac{\text{Cov}[\widetilde{P}_i, \widetilde{P}_j]}{q_i q_j}$, we have:
    \begin{align}
    \Var&[\log(\mathrm{RMM})] = \Var\left[\sum_{k=1}^K \alpha_k \log(\widetilde{P}_k)\right] \\
    &= \sum_{k=1}^K \alpha_k^2 \Var[\log(\widetilde{P}_k)] \notag \\
    &\quad + \sum_{i \neq j} \alpha_i \alpha_j \text{Cov}[\log(\widetilde{P}_i), \log(\widetilde{P}_j)] \\
    &= \sum_{k=1}^K \alpha_k^2 \frac{\Var[\widetilde{P}_k]}{q_k^2} \notag \\
    &\quad + \sum_{i \neq j} \alpha_i \alpha_j \frac{\text{Cov}[\widetilde{P}_i, \widetilde{P}_j]}{q_i q_j}.
    \end{align}

    Substituting the variance and covariance formulas from Step 3:
    \begin{align}
    \Var&[\log(\mathrm{RMM})] = \sum_{k=1}^K \alpha_k^2 \frac{q_k(1-q_k)/S}{q_k^2} \notag\\
    &\quad + \sum_{i \neq j} \alpha_i \alpha_j \frac{-q_i q_j/S}{q_i q_j} \\
    &= \sum_{k=1}^K \alpha_k^2 \frac{1-q_k}{S q_k} - \sum_{i \neq j} \alpha_i \alpha_j \frac{1}{S} \\
    &= \frac{1}{S}\left[\sum_{k=1}^K \alpha_k^2 \frac{1-q_k}{q_k} - \sum_{i \neq j} \alpha_i \alpha_j\right] \\
    &= \frac{1}{S}\left[\sum_{k=1}^K \alpha_k^2 \frac{1-q_k}{q_k} - \left(\left(\sum_{k=1}^K \alpha_k \right)^2 - \sum_{k=1}^K \alpha_k^2\right)\right] \\
    &= \frac{1}{S}\left[\sum_{k=1}^K \alpha_k^2 \frac{1-q_k}{q_k} - \left(1 - \sum_{k=1}^K \alpha_k^2\right)\right] \\
    &= \frac{1}{S}\left[\sum_{k=1}^K \alpha_k^2 \left(\frac{1-q_k}{q_k} + 1\right) - 1\right] \\
    &= \frac{1}{S}\left[\sum_{k=1}^K \frac{\alpha_k^2}{q_k} - 1\right] \\
    &= \frac{\kappa - 1}{S} = O\left(\frac{\kappa}{NT}\right) = O\left(\frac{1}{N_{\mathrm{eff}} T}\right). \label{eq:var-final}
    \end{align}
    Finally, applying the delta method to $\mathrm{RMM} = \exp(\log(\mathrm{RMM}))$:
    \begin{align}
    \Var[\mathrm{RMM}] &\approx (\mathrm{RMM})^2 \cdot \Var[\log(\mathrm{RMM})] \notag \\
    &= O\left(\frac{1}{N_{\mathrm{eff}} T}\right).
    \end{align}

    \proofstep{Step 6: Lifting to the aggregate.}
    Each per-feature component satisfies $\Var(\mathrm{RMM}_d) = O((N_{\mathrm{eff},d} \cdot T)^{-1})$ where $N_{\mathrm{eff},d} = N/\kappa_d$.
    By the sub-additivity of standard deviation and $\sum_d w_d = 1$,
    \begin{align}
    \Var\!\left(\textstyle\sum_d w_d\,\mathrm{RMM}_d\right)
    &\leq \left(\textstyle\sum_d w_d\right)^{\!2} \max_d \Var(\mathrm{RMM}_d) \notag \\
    &= \max_d \Var(\mathrm{RMM}_d) \notag \\
    &= O\!\left(\frac{1}{\hat{N}_{\mathrm{eff}} T}\right),
    \end{align}
    where $\hat{N}_{\mathrm{eff}} = N / \max_d \kappa_d$.
    The bounds from Step~4 give $1 \leq \kappa_d \leq 1/\min\limits_{\substack{k:\\ q_{k,d}>0}} q_{k,d}$ for each $d$, so $\max_d \kappa_d \leq 1/\min\limits_{\substack{k,d:\\ q_{k,d}>0}} q_{k,d}$ and hence $\hat{N}_{\mathrm{eff}} \in \bigl[N \cdot \min\limits_{\substack{k,d:\\ q_{k,d}>0}} q_{k,d},\; N\bigr]$.
\end{proof}

\begin{proof}[Proof of Proposition~\ref{prop:mloo-rloo-variance}]
We establish the variance bounds for both estimators using approximations suitable for the leave-one-out setting.

\proofstep{Step 1: Variance of MLOO.}
From \propref{prop:var-meta-metric-is}, each $\mathrm{RMM}_{-i}$ has variance $\Var(\mathrm{RMM}_{-i}) = O(1/((N-1)T))$.

The MLOO estimator can be written as:
\begin{align}
&\mathrm{RMM}_i^{\text{MLOO}} = \frac{1}{N} \sum_{j=1}^N \mathrm{RMM}_{-j} - \mathrm{RMM}_{-i} \\
&= \frac{1}{N} \sum_{j \neq i} \mathrm{RMM}_{-j} + \frac{1}{N} \mathrm{RMM}_{-i} - \mathrm{RMM}_{-i} \\
&= \frac{1}{N} \sum_{j \neq i} \mathrm{RMM}_{-j} - \frac{N-1}{N} \mathrm{RMM}_{-i}.
\end{align}

Since the leave-one-out estimates are correlated (they share $N-2$ common rollouts), we need to account for covariances. Let $\sigma^2 = \Var(\mathrm{RMM}_{-i}) = \frac{C_{\text{var}}}{(N-1)T}$ for some constant $C_{\text{var}} > 0$. For $i \neq j$, we approximate the covariance between $\mathrm{RMM}_{-i}$ and $\mathrm{RMM}_{-j}$ as proportional to the fraction of shared rollouts:
\begin{align}
    \text{Cov}(\mathrm{RMM}_{-i}, \mathrm{RMM}_{-j}) &\approx \frac{N-2}{N-1} \cdot \sigma^2 \notag \\
    &= \frac{N-2}{N-1} \cdot \frac{C_{\text{var}}}{(N-1)T}.
\end{align}

Therefore:
\begin{align}
&\Var(\mathrm{RMM}_i^{\text{MLOO}}) \notag \\
&= \Var\left(\frac{1}{N} \sum_{j \neq i} \mathrm{RMM}_{-j} - \frac{N-1}{N} \mathrm{RMM}_{-i}\right) \\
&= \frac{1}{N^2} \Var\left(\sum_{j \neq i} \mathrm{RMM}_{-j}\right) + \frac{(N-1)^2}{N^2} \Var(\mathrm{RMM}_{-i}) \notag \\
&\quad - 2 \cdot \frac{N-1}{N^2} \text{Cov}\left(\mathrm{RMM}_{-i}, \sum_{j \neq i} \mathrm{RMM}_{-j}\right) \\
&= \frac{1}{N^2} \left[(N-1)\sigma^2 + (N-1)(N-2) \cdot \frac{N-2}{N-1} \sigma^2\right] \notag \\
&\quad + \frac{(N-1)^2}{N^2} \sigma^2 - 2 \cdot \frac{N-1}{N^2} \cdot (N-1) \cdot \frac{N-2}{N-1} \sigma^2 \\
&= \frac{1}{N^2} \left[(N-1) + (N-1)(N-2)^2/(N-1)\right] \sigma^2 \notag \\
&\quad + \frac{(N-1)^2}{N^2} \sigma^2 - 2 \cdot \frac{(N-1)(N-2)}{N^2} \sigma^2 \\
&= \frac{1}{N^2} \left[(N-1) + (N-2)^2\right] \sigma^2 \notag \\
&\quad + \frac{(N-1)^2}{N^2} \sigma^2 - 2 \cdot \frac{(N-1)(N-2)}{N^2} \sigma^2 \\
&= \frac{\sigma^2}{N^2} \left[N-1 + (N-2)^2 + (N-1)^2 \right. \notag \\
&\quad \left. - 2(N-1)(N-2)\right] \\
&= \frac{\sigma^2}{N^2} \left[N-1 + ((N-2) - (N-1))^2\right] \\
    &= \frac{\sigma^2}{N^2} \left[N-1 + 1\right] = \frac{N \sigma^2}{N^2} = \frac{\sigma^2}{N} = \frac{C_{\text{var}}}{N(N-1)T}.
\end{align}

\proofstep{Step 2: Variance of RLOO.}
The RLOO estimator is:
\[
\mathrm{RMM}_i^{\text{RLOO}} = \mathrm{RMM}_i - \frac{1}{N-1} \sum_{j \neq i} \mathrm{RMM}_j.
\]

Since $\mathrm{RMM}_i$ evaluates the meta-metric on a single rollout ($N{=}1$, $S{=}T$), \propref{prop:var-meta-metric-is} gives $\tau^2 = \Var(\mathrm{RMM}_i) = \frac{D}{T}$ for some constant $D > 0$. Because rollouts are i.i.d., the $\mathrm{RMM}_i$ are independent, hence:
\begin{align}
\Var&(\mathrm{RMM}_i^{\text{RLOO}}) = \Var\left(\mathrm{RMM}_i - \frac{1}{N-1} \sum_{j \neq i} \mathrm{RMM}_j\right) \\
&= \Var(\mathrm{RMM}_i) + \Var\left(\frac{1}{N-1} \sum_{j \neq i} \mathrm{RMM}_j\right) \label{eq:rloo-var-1} \\
&= \tau^2 + \frac{1}{(N-1)^2} \Var\left(\sum_{j \neq i} \mathrm{RMM}_j\right) \\
&= \tau^2 + \frac{1}{(N-1)^2} \cdot (N-1) \tau^2 \\
&= \tau^2 + \frac{\tau^2}{N-1} \\
&= \frac{N \tau^2}{N-1} \\
&= \frac{N D}{(N-1) T}.
\end{align}

Since $\frac{N}{N-1}$ is bounded (specifically, $1 < \frac{N}{N-1} \leq 2$ for $N \geq 2$), we have:
\begin{align}
\Var(\mathrm{RMM}_i^{\text{RLOO}}) = O\left(\frac{1}{T}\right). \qedhere
\end{align}
\end{proof}

\subsection{Goal Conditioning Architectures}
\label{app:goal_architectures}

Here we provide further implementation details on the goal conditioning methods discussed in \secref{sec:controllability}.

\paragraph{Agent Token Embedding Concatenation.} An intuitive method of including goal information in the observation is to directly include the goal coordinate,  $\textbf{x}^i_g$, or the relative goal position, $\textbf{r}^i_g = (r^i_g, \phi^i_g)$, in the observed state for each agent, $S_t = \{ S^i_{t'} || \textbf{x}^i_g \mid i \leq N_a, t' \leq t \}$, where $||$ denotes concatenation.
However, since the domain of goals is expansive, it can be difficult for a model to learn an appropriate feature vector. This is because goal coordinates are continuous, and the concatenation process further increases the dimensionality of the embedding vector, which increases the fine-tuning iterations required to generalize.

\paragraph{Positional Encoding Indication.} Instead of including goal coordinates in the input of the agent token embedding encoder, an alternative method is to extend the relative positional encoding (RPE) to include a binary categorical embedding that indicates whether the relationship between agent $i$ and road token $j$ is a goal relationship, which occurs when $j = P^i_g$.
Although this also requires introducing new parameters similar to extending the agent token embeddings, since the input domain is binary, arguably it is easier for a model to learn during the fine-tuning stage. 
Furthermore, given that goal indication is binary and tied to individual polylines, an agent can be unconditioned by providing no goal indication for any polyline. This allows the simulation for that agent to solely focus on maintaining realism. Thus, this method enables a hybrid style simulation where some agents can be conditioned on particular goals while others remain unconditioned.

\begin{table}[!tp]
    \caption{Hyperparameter sweep ranges explored for RLFTSim. Final values are highlighted in \textbf{bold}.}
    \label{tab:hyperparams}
    \centering
    \small
    \begin{tabular}{l c}
        \hline
        \textbf{Hyperparameter} & \textbf{Values swept } \\
        \hline
        Learning rate & 1e-6, \textbf{3e-6}, 1e-5, 3e-5 \\
        Adam~$\beta_1$ & \textbf{0.9}, 0.95 \\
        Warmup steps & \textbf{ 100,} 500\\
        Entropy bonus & \textbf{0.0}, 0.01 \\
        Training rollouts per update & 2, \textbf{4}, 6, 8 \\
        Weight decay & 0.0, \textbf{0.01} \\
        Batch size & \textbf{8},16,32 \\
        Gradient clipping & \textbf{1.0} \\
        KL controller: horizon & \textbf{5} \\
        KL controller: min & \textbf{1e-3} \\
        KL controller: max & \textbf{1e+3} \\
        KL controller: target & 0.005,\textbf{0.01}\\
        Sync. ref. model: steps & \textbf{500} \\
        Sync. ref. model: alpha & \textbf{0.005} \\
        GCFT goal reward weight $\lambda$ & \textbf{0.1}, 0.5 \\
        \hline
    \end{tabular}

\end{table}

\section{Implementation Details}
\label{app:implementation_details}

\paragraph{Model Training.} We train SMART-tiny models on the Waymo Open Motion Dataset  for 32 epochs following the implementation and hyperparameters in \cite{Wu2024SMART}. For the base model training, we use standard supervised learning with cross-entropy loss for next-token prediction.
For the RLFT post-training stage, we use the configuration in \tabref{tab:hyperparams}.
We use the adaptive KL controller from \cite{ziegler2020finetuninglanguagemodelshuman} to control the KL divergence between the model's output distribution and the pre-trained model's output distribution, and sustain model training stability; its hyperparameters are set as in \tabref{tab:hyperparams}.
For the GCFT post-training stage, the hyperparameter configuration is kept the same; however, to ensure that realism is maintained while improving controllability, we use a reward weight of $\lambda{=}0.1$ (\eqref{eq:gcft_reward}). More aggressive goal conditioning can be achieved with higher lambda values.
Experiments for both the base model pre-training and fine-tuning are conducted on a server with Intel Xeon Platinum 8180 CPU @2.50GHz, 728GB RAM, and 8x NVIDIA V100 GPUs each with $32$GB GPU memory.

\paragraph{Dataset.} We use the Waymo Open Motion Dataset  for training and evaluation.
WOMD has 486,995/44,097/44,920 scenarios in the training/validation/test splits, respectively. Each scenario contains up to 128 agents, including agents of type vehicle, pedestrian, and cyclist. Each scenario has a length of 9.1 seconds, consisting of 1.1 seconds for the history input length and 8 seconds for the future simulation horizon.

\paragraph{Evaluation Protocol.} 
The results in \tabref{tab:waymo_results} are based on the private test split of the WOSAC leaderboard.
Unless otherwise specified, all ablation studies and analysis are conducted on a randomly selected 20\% subset of the WOMD validation split (8,800 scenarios).
For realism evaluation, we generate $32$ rollouts per scenario following the WOSAC protocol\footnote{More details can be found in \url{https://waymo.com/open/challenges/2025/sim-agents/}}. The agents that only appear in future time steps are excluded from the simulation and evaluation. The evaluation metrics by default are only based on the specified evaluation agent IDs (the ego vehicle and up to 8 agents tagged as tracks\_to\_predict in the WOMD). Although the other agents are not evaluated, they are included in the simulation and indirectly affect the evaluation metrics for the selected agents.

\begin{table}[!tp]
\centering
\caption{Extended Benchmarking. \textbf{Top:} Performance scaling comparison of our RLFTSim vs. CAT-K \cite{Zhang2024ClosedLoop} with the number of fine-tuning epochs. $\dagger$ indicates a weaker reference model with only 1 epoch of pre-training. \textbf{Middle:} Stronger realism enhancement with a weaker reference model. \textbf{Bottom:} Max realism meta-metric for the ground truth trajectories.}
\label{tab:extended_benchmarking}
\resizebox{\linewidth}{!}{
\begin{tabular}{ll>{\columncolor[gray]{0.8}}cccc}
    \toprule
    \rowcolor{white!50}
    Model & Epoch
    & RMM$\uparrow$
    & Kinematic$\uparrow$
    & Interactive$\uparrow$ 
    & Map-based$\uparrow$  \\
    \midrule
    SMART-tiny \cite{Wu2024SMART} (ref. model) & 0.00 & 0.77692 & 0.48329 & 0.80288 & 0.91135 \\ \cdashline{1-6}
    \multirow{4}{*}{SMART-tiny RLFTSim (Ours)} 
    & 0.25 & 0.78137 & 0.49008 & 0.80807 & 0.91348 \\
    & 0.50 & 0.78183 & 0.48953 & 0.80922 & 0.91364 \\ 
    & 1.00 & 0.78166 & 0.49001 & 0.80897 & 0.91321 \\
    & 1.50 & 0.78113 & 0.48926 & 0.80873 & 0.91242 \\
    \cdashline{1-6}
    \multirow{5}{*}{SMART-tiny CAT-K \cite{Zhang2024ClosedLoop}}
    & 0.25 & 0.78093 & 0.49107 & 0.80435 & 0.91644 \\
    & 0.50 & 0.78101 & 0.49093 & 0.80492 & 0.91603 \\
    & 1.00  & 0.78091 & 0.49124 & 0.80499 & 0.91547 \\
    & 2.00 & 0.78086 & 0.49087 & 0.80498 & 0.91556 \\
    & 5.00 & 0.77983 & 0.48991 & 0.80534 & 0.91270 \\ 
    \midrule
    $\dagger$SMART-tiny \cite{Wu2024SMART} (ref. model) & 0.00 & 0.75073 & 0.46441 & 0.76797 & 0.89220 \\ \cdashline{1-6}
    \multirow{3}{*}{$\dagger$SMART-tiny RLFTSim (Ours)} 
    & 0.50 & 0.76368 & 0.46924 & 0.78618 & 0.90301 \\
    & 1.00 & 0.76360 & 0.46851 & 0.79099 & 0.89701 \\
    & 1.50 & 0.76421 & 0.46721 & 0.79045 & 0.90018 \\
    \midrule
    Oracle & NA & 0.82925 & 0.54976 & 0.85227 & 0.95935 \\
    \bottomrule
\end{tabular}
}
\end{table}

\paragraph{GCFT Reward Design.}
For both soft and hard goals, we assign a binary reward to each agent, indicating whether the agent successfully passes (soft target) or reaches (hard target) its designated goal. The final goal-reaching reward for a scenario is computed by averaging this binary signal over the ego agent and all agents labeled as tracks\_to\_predict.

\section{Additional Experimental Results}

\subsection{Extended Realism Benchmarking}
\label{app:sample_efficiency}
In \tabref{tab:extended_benchmarking}, we provide a more detailed analysis of RLFTSim's fine-tuning performance. The results show that RLFTSim achieves a higher peak RMM score (0.7818) compared to our re-trained SMART-tiny CAT-K model (0.7810) using their public implementation on the same reference model. 
While the margin of improvement over the strong baseline may seem modest, it is important to contextualize this within the performance ceiling. The base SMART-tiny model already achieves an RMM of 0.7769, which is approaching the oracle score of 0.8293, defined as the RMM computed when ground-truth trajectories are used as rollouts. As a model's performance nears this upper bound, further gains become increasingly challenging to achieve.
We observe that both RLFTSim and the re-trained CAT-K baseline reach their peak RMM within the first epoch of fine-tuning, after which performance plateaus.

The effectiveness of RLFTSim is more prominently illustrated when applied to a weaker starting model, as a diagnostic experiment. As shown in the middle section of \tabref{tab:extended_benchmarking}, when fine-tuning a less-optimized SMART-tiny model ($\dagger$SMART-tiny), which is only pre-trained for 1 epoch, with a starting RMM of 0.7507, RLFTSim delivers a substantial performance boost, increasing the RMM to 0.7642 (+1.8\%). This demonstrates our method's capability to enhance the realism of the base model, while the margin of improvement is dependent on the starting performance of the base model.

\begin{figure*}[!tph]
  \newcommand{\topcrop}{0pt}
  \centering
  \resizebox{0.7\textwidth}{!}{
    \begin{minipage}{\textwidth}
      \centering
      \begin{subfigure}[b]{0.47\textwidth}
        \centering
        \includegraphics[trim=0 0 0 \topcrop, clip, width=\textwidth]{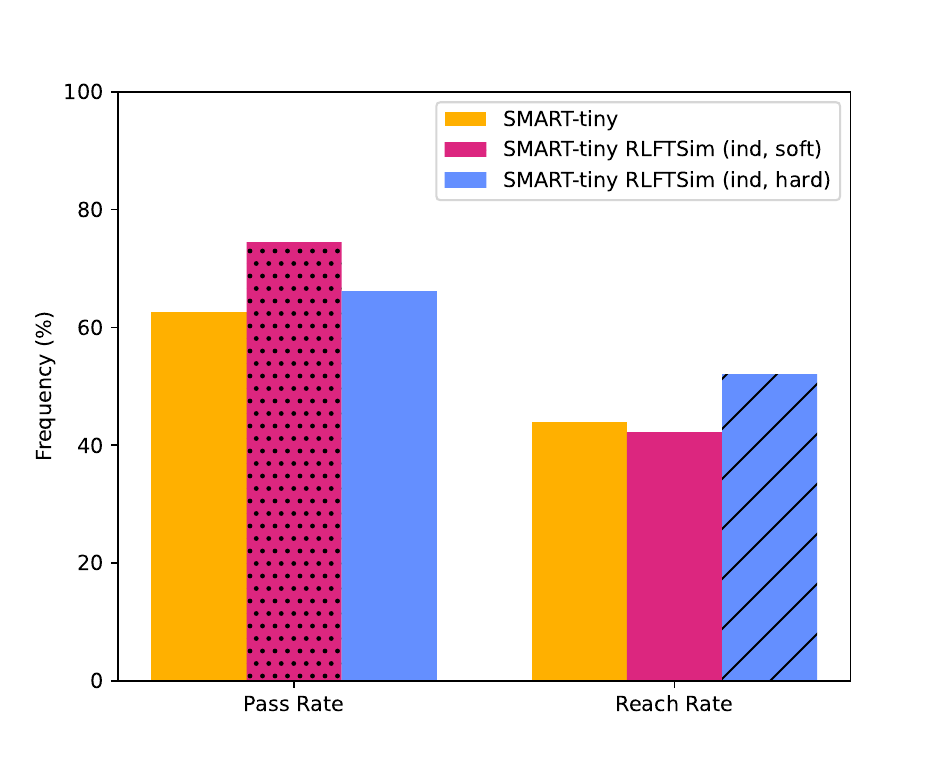}
        \caption{Ground-truth maneuvers}
        \label{fig:modality_control_orig}
      \end{subfigure}
      \hfill
      \begin{subfigure}[b]{0.47\textwidth}
        \centering
        \includegraphics[trim=0 0 0 \topcrop, clip, width=\textwidth]{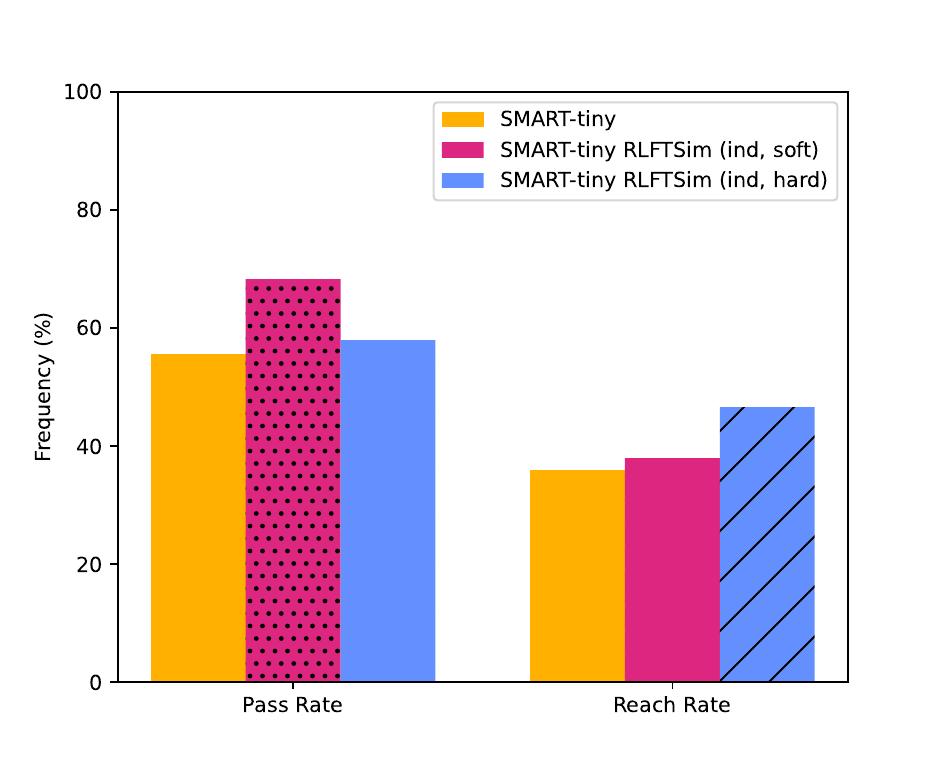}
        \caption{Mixed maneuvers}
        \label{fig:modality_control_some}
      \end{subfigure}

      \begin{subfigure}[b]{0.47\textwidth}
        \centering
        \includegraphics[trim=0 0 0 \topcrop, clip, width=\textwidth]{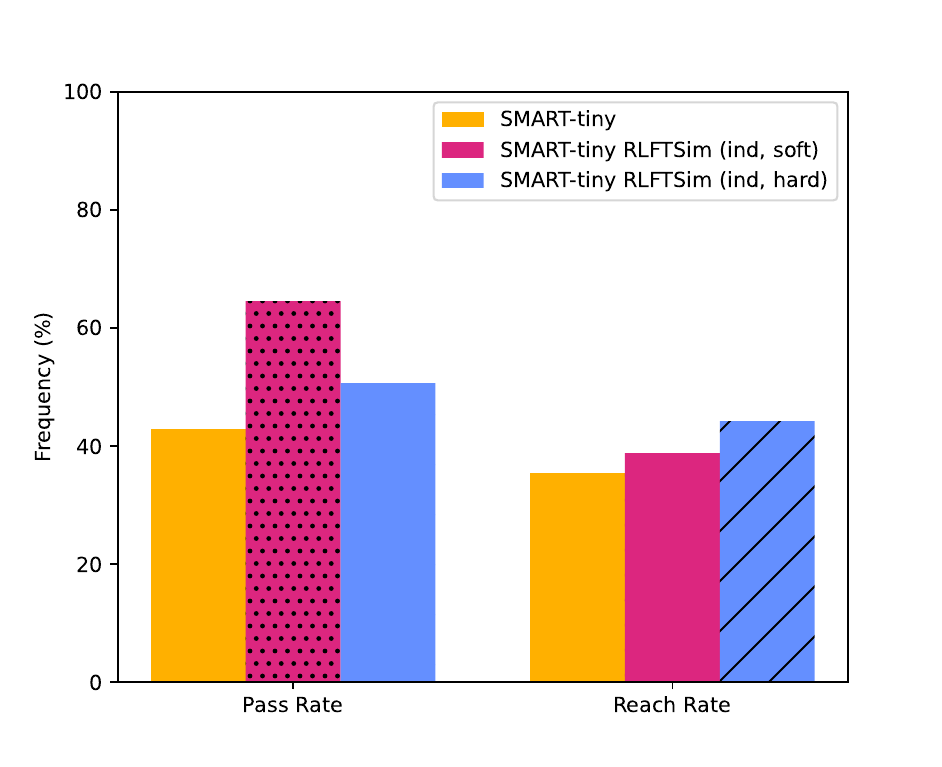}
        \caption{Alternative maneuvers}
        \label{fig:modality_control_no}
      \end{subfigure}
      \hfill
      \begin{subfigure}[b]{0.49\textwidth}
        \centering
        \includegraphics[width=\textwidth]{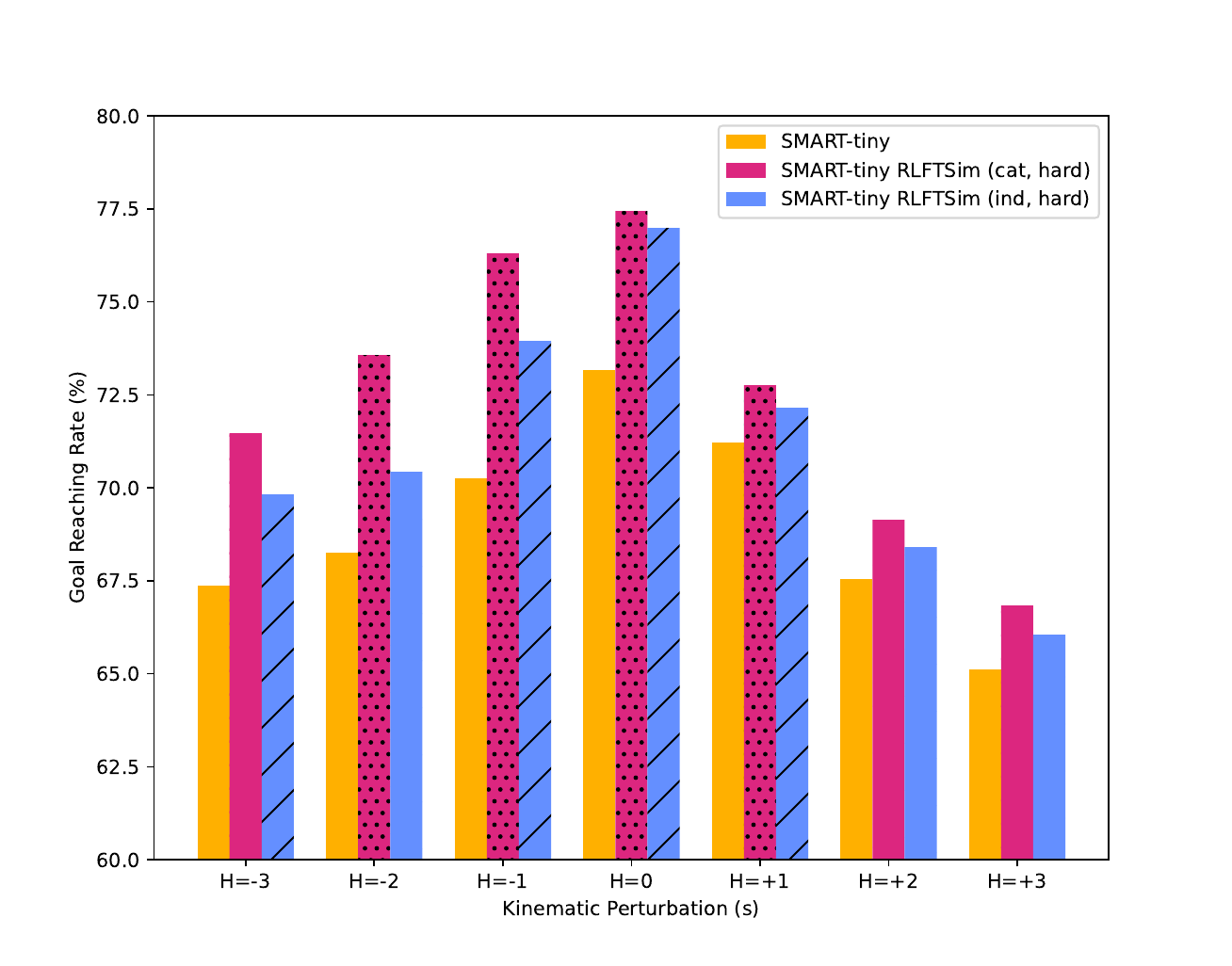}
        \caption{Kinematic controllability}
        \label{fig:kinematic_perturbations}
      \end{subfigure}
    \end{minipage}
  }
  \caption{Controllability benchmark performance across various experimental conditions: (a) all goals are set to ground-truth maneuvers, (b) goals are randomly sampled from all maneuvers, (c) goals are exclusively sampled from alternative maneuvers, and (d) simulation controllability with kinematic perturbations. GCFT models consistently outperform the baseline across all conditions, demonstrating effective controllability distillation.}
  \label{fig:modality_control}
\end{figure*} 
\subsection{Heuristic Rewards}
\label{app:heuristic_rewards}

\begin{table}[!tp]
    \centering
    \caption{Heuristic rewards for the realism meta-metric. All metrics are evaluated on the ego vehicle and agents tagged as \texttt{tracks\_to\_predict} (up to 9 agents). $(\uparrow)$ indicates that larger values are better, and $(\downarrow)$ indicates smaller values are better. Miss rate is computed with the passing goal criterion. \textbf{Bold} and \underline{underline} indicate the best and second best values, respectively.}
    \label{tab:heuristic_rewards}
    \maybeResize[1]{
    \begin{tabular}{l>{\columncolor[gray]{0.9}}ccccc}
    \toprule
    \rowcolor{white!50}
    Reward & RMM $\uparrow$ & Collision (\%) $\downarrow$ & Offroad (\%) $\downarrow$ & ADE (m) $\downarrow$ & minADE (m) $\downarrow$ \\
    \midrule
    SMART-tiny \cite{Wu2024SMART} (ref. model) & 0.7769 & 5.67 & 15.14 & 2.59 & \textbf{1.30} \\
    $\text{RMM}^{\text{MLOO}}$ & \textbf{0.7818} & \underline{4.53} & \underline{14.71} & \underline{2.55} & \underline{1.31} \\
    Collision-offroad-ADE & \underline{0.7788} & 4.93 & 14.73 & \textbf{2.39} & 1.32 \\
    Collision-offroad & 0.7769 & \textbf{4.51} & \textbf{13.95} & 2.62 & 1.36 \\
    \bottomrule
    \end{tabular}
    }
\end{table}

\tabref{tab:heuristic_rewards} presents a comprehensive comparison of different reward formulations for RL-based traffic simulation alignment. 
The results demonstrate the effectiveness of our proposed $\text{RMM}^{\text{MLOO}}$ reward signal compared to heuristic alternatives.
While the collision-offroad reward achieves the lowest collision (4.51\%) and offroad (13.95\%) rates, it sacrifices overall realism, as evidenced by its lower RMM score (0.7769), which is tied with the base model.
In contrast, $\text{RMM}^{\text{MLOO}}$ achieves the best RMM (0.7818), demonstrating superior alignment with realistic driving behaviors while maintaining competitive safety metrics.
The collision-offroad-ade reward, which combines safety metrics with trajectory accuracy, achieves the best ADE (2.39m), but still underperforms $\text{RMM}^{\text{MLOO}}$ in overall realism.
Notably, the base model achieves the best minADE (1.30m), suggesting that pre-trained imitation learning models excel at trajectory accuracy but may not fully capture the distribution of realistic behaviors measured by RMM.
These results validate our design choice of using MLOO as the primary reward signal, as it optimizes the official benchmark metric (RMM) while maintaining reasonable performance across all other metrics, including collision rates, offroad rates, and trajectory errors.

\begin{table}[!tp]
\centering
\caption{Paired t-test on per-scenario RMM scores from the full validation set (\tabref{tab:reward_ablation}) at significance threshold $\alpha=10^{-3}$. \colorbox{green!15}{$\boldsymbol{+}$}\,/\,\colorbox{red!15}{$\boldsymbol{-}$} indicates row is significantly better/worse than column; $\sim$ indicates no significant difference.}
\label{tab:paired_ttest}
\maybeResize[1]{
\small
\begin{tabular}{lcccccc}
\toprule
 & \rotatebox{55}{$\text{RMM}^{\text{MLOO}}$}
 & \rotatebox{55}{$\text{RMM}^{\text{RLOO}}$}
 & \rotatebox{55}{Col.+Off.+ADE}
 & \rotatebox{55}{Ref.\ model}
 & \rotatebox{55}{$\text{minADE}^{\text{RLOO}}$}
 & \rotatebox{55}{Collision+Offroad} \\
\midrule
$\text{RMM}^{\text{MLOO}}$        & \cellcolor{gray!25} & \cellcolor{green!15}$\boldsymbol{+}$ & \cellcolor{green!15}$\boldsymbol{+}$ & \cellcolor{green!15}$\boldsymbol{+}$ & \cellcolor{green!15}$\boldsymbol{+}$ & \cellcolor{green!15}$\boldsymbol{+}$ \\
$\text{RMM}^{\text{RLOO}}$        & \cellcolor{red!15}$\boldsymbol{-}$ & \cellcolor{gray!25} & \cellcolor{green!15}$\boldsymbol{+}$ & \cellcolor{green!15}$\boldsymbol{+}$ & \cellcolor{green!15}$\boldsymbol{+}$ & \cellcolor{green!15}$\boldsymbol{+}$ \\
Col.+Off.+ADE                      & \cellcolor{red!15}$\boldsymbol{-}$ & \cellcolor{red!15}$\boldsymbol{-}$ & \cellcolor{gray!25} & $\sim$ & \cellcolor{green!15}$\boldsymbol{+}$ & \cellcolor{green!15}$\boldsymbol{+}$ \\
Ref.\ model                       & \cellcolor{red!15}$\boldsymbol{-}$ & \cellcolor{red!15}$\boldsymbol{-}$ & $\sim$ & \cellcolor{gray!25} & \cellcolor{green!15}$\boldsymbol{+}$ & \cellcolor{green!15}$\boldsymbol{+}$ \\
$\text{minADE}^{\text{RLOO}}$     & \cellcolor{red!15}$\boldsymbol{-}$ & \cellcolor{red!15}$\boldsymbol{-}$ & \cellcolor{red!15}$\boldsymbol{-}$ & \cellcolor{red!15}$\boldsymbol{-}$ & \cellcolor{gray!25} & \cellcolor{green!15}$\boldsymbol{+}$ \\
Collision+Offroad                  & \cellcolor{red!15}$\boldsymbol{-}$ & \cellcolor{red!15}$\boldsymbol{-}$ & \cellcolor{red!15}$\boldsymbol{-}$ & \cellcolor{red!15}$\boldsymbol{-}$ & \cellcolor{red!15}$\boldsymbol{-}$ & \cellcolor{gray!25} \\
\bottomrule
\end{tabular}
}
\end{table}
 
While \tabref{tab:reward_ablation} reports standard errors of the mean RMM, these marginal intervals do not account for the paired structure of the evaluation: all methods are assessed on the same scenarios. A two-sided paired t-test on per-scenario RMM differences across $N{=}44{,}097$ validation scenarios is therefore more powerful, as it removes inter-scenario variance. \tabref{tab:paired_ttest} reports the resulting pairwise outcomes. $\text{RMM}^{\text{MLOO}}$ significantly outperforms all other reward formulations.

\subsection{Extended Controllability Analysis}
\label{app:controllability}

Here we provide more analysis on the controllability benchmarking discussed in \secref{exp:goal_completion} and present detailed results.
A key motivation for a controllable simulator is the ability to provide externally supplied behaviors to individual agents, especially those diverging from the agent’s original trajectory. To probe this capability, we introduce two benchmarks that assess how effectively a simulation can be conditioned on novel, goal-directed behaviors.

\paragraph{Kinematic Controllability.} This benchmark tests the ego's ability to reach goal coordinates displaced in time from its ground-truth endpoint by a signed horizon $H \in \{-3,\ldots,+3\}$ seconds (\figref{fig:kinematic_perturbations}). For $H<0$, the goal is the ground-truth position at time $T+H$; for $H>0$, it is obtained by propagating the ground-truth kinematic state at $T$ with a constant-velocity bicycle model for $H$ seconds. $H=0$ is the unperturbed ground-truth goal.

Both hard-reward GCFT variants, (cat, hard) and (ind, hard), outperform the baseline SMART-tiny across the full range and on both sides of $H=0$. Concatenation (cat) leads indication (ind) across tested horizons, consistent with the hard-reward ordering on the maneuver benchmark (\tabref{tab:modality_control}).

\paragraph{Maneuver Controllability.}
We construct a benchmark of 100 scenarios from the WOMD evaluation set in which the ego vehicle has multiple valid maneuvers. For each scenario, we manually select alternative goal coordinates (\secref{sec:controllability}) corresponding to one of six maneuver types: drive straight, left turn, right turn, left U-turn, and lane change left or right. The benchmark contains both ground-truth and alternative maneuvers. Alternative maneuvers are harder than ground-truth ones: the latter come from the pre-training distribution, while the former require generalization beyond it.

\figref{fig:modality_control} shows goal-completion rates for the two GCFT (ind) variants against the baseline SMART-tiny across three goal-set conditions of increasing difficulty: all goals matching the ground-truth maneuver (\figref{fig:modality_control_orig}), a half-half mix of ground-truth and alternative goals (\figref{fig:modality_control_some}), and alternative goals only (\figref{fig:modality_control_no}). GCFT (ind) remains competitive with the baseline when goals match the ground truth, and outperforms it under both the mixed and alternative-only conditions, where the model must simulate maneuvers different from this scenario's recorded ground truth.

\begin{table}[!tp]
    \centering
    \caption{Analysis on maneuver controllability benchmark. Targets are only chosen from the set that does not contain the  ground-truth maneuver. Only the ego vehicle is evaluated. Note that lower absolute goal completion rates are expected as GT maneuvers are excluded from the benchmark.}
    \label{tab:modality_control}
    \maybeResize[0.70]{
      \small
      \begin{tabular}{lcc}
      \toprule
        \multirow{2}{*}{Method} & \multicolumn{2}{c}{Goal Completion Rate}\\
        \cmidrule(lr){2-3}
        & Reach Rate (\%) $\uparrow$ & Pass Rate (\%) $\uparrow$  \\
        \midrule
        SMART-tiny \cite{Wu2024SMART} (ref. model) & 35.37 & 42.89  \\ 
        \cmidrule(lr){1-3}
        RLFTSim (cat, soft) & 37.59 & \textbf{75.34} \\
        RLFTSim (cat, hard) & \textbf{50.00} & \underline{68.37} \\
        RLFTSim (ind, soft) & 38.78 & 64.46 \\
        RLFTSim (ind, hard) & \underline{44.22} & 50.68 \\
        \bottomrule
      \end{tabular}
    }
\end{table}
 
\tabref{tab:modality_control} reports goal-completion rates, based on their definition in \secref{sec:controllability}, on the alternative-only condition after 20K GCFT steps and with a goal reward weight of $\lambda{=}0.1$ (\eqref{eq:gcft_reward}). All four GCFT variants improve upon the baseline SMART-tiny with pass rate gains (8--32 percentage points) larger than reach rate gains (2--15 percentage points). Concatenation (cat) dominates indication (ind) in three of the four (metric, reward) cells; the exception is soft-reward reach rate, where (ind) slightly leads. The hard reward attains the best reach rate (50.0\%, cat, hard) and the soft reward the best pass rate (75.3\%, cat, soft), matching their respective reward definitions (\secref{sec:controllability}).

\begin{table}[!tp]
\centering
\caption{Model agnosticism ablation study. Experiments are done using 20\% of the WOMD validation split.}
\label{tab:model_agnostic}
\resizebox{\linewidth}{!}{
\begin{tabular}{l>{\columncolor[gray]{0.8}}cccc}
    \toprule
    \rowcolor{white!50}
    Model
    & RMM$\uparrow$
    & Kinematic$\uparrow$
    & Interactive$\uparrow$ 
    & Map-based$\uparrow$  \\
    \midrule
    TrafficBots V1.5 \cite{zhang2024trafficbotsv15} & 0.71743 & 0.42712 & 0.73166 & 0.86502 \\
    TrafficBots V1.5 RLFTSim (Ours)
    & 0.72305 & 0.43209 & 0.73773 & 0.87043 \\
    \bottomrule
\end{tabular}
}
\end{table}

\subsection{Model Agnosticism}
\label{app:agnostic_ablations}

To demonstrate the capability of RLFTSim to be model-agnostic, we fine-tune the TrafficBots V1.5 \cite{zhang2024trafficbotsv15} model using RLFTSim to enhance the performance further. \tabref{tab:model_agnostic} also compares the TrafficBots V1.5 model after 1 epoch of pre-training with the TrafficBots V1.5 RLFTSim model after a further fine-tuning for 12{,}000 steps. Pre-training follows the default configuration of~\cite{zhang2024trafficbotsv15}, and for RLFTSim post-training, we reuse the hyperparameters from \tabref{tab:hyperparams}.  The improved RMM from 0.7174 to 0.7231 shows that RLFTSim extends beyond the discrete-token SMART baseline to a continuous-action, VAE-based architecture for TrafficBots V1.5.

\subsection{Extended Qualitative Examples}
\label{app:qualitative_samples}

The collision and off-road examples (\figref{fig:collision1}, \figref{fig:collision2}, \figref{fig:collision3}, \figref{fig:offroad1}) show how RLFTSim addresses safety violations present in the baseline SMART-tiny model. The pre-trained model generates vehicle-pedestrian collisions, rear-end crashes, right-of-way violations, and off-road excursions, while RLFTSim produces behaviors that respect traffic rules and adhere to drivable areas. These improvements correspond to the enhanced interactive and map-based metrics reported in \tabref{tab:waymo_results}.

The goal-conditioned examples (\figref{fig:gcft_redlight}, \figref{fig:gcft_stopsign}, \figref{fig:gcft_parking}) showcase how GCFT distills controllability in the simulation, allowing for specific goals to be specified that the fine-tuned model is capable of reaching.

\newcommand*{\figwidth}{0.80\linewidth}

\begin{figure*}[!tph]
    \centering
    \includegraphics[width=\figwidth]{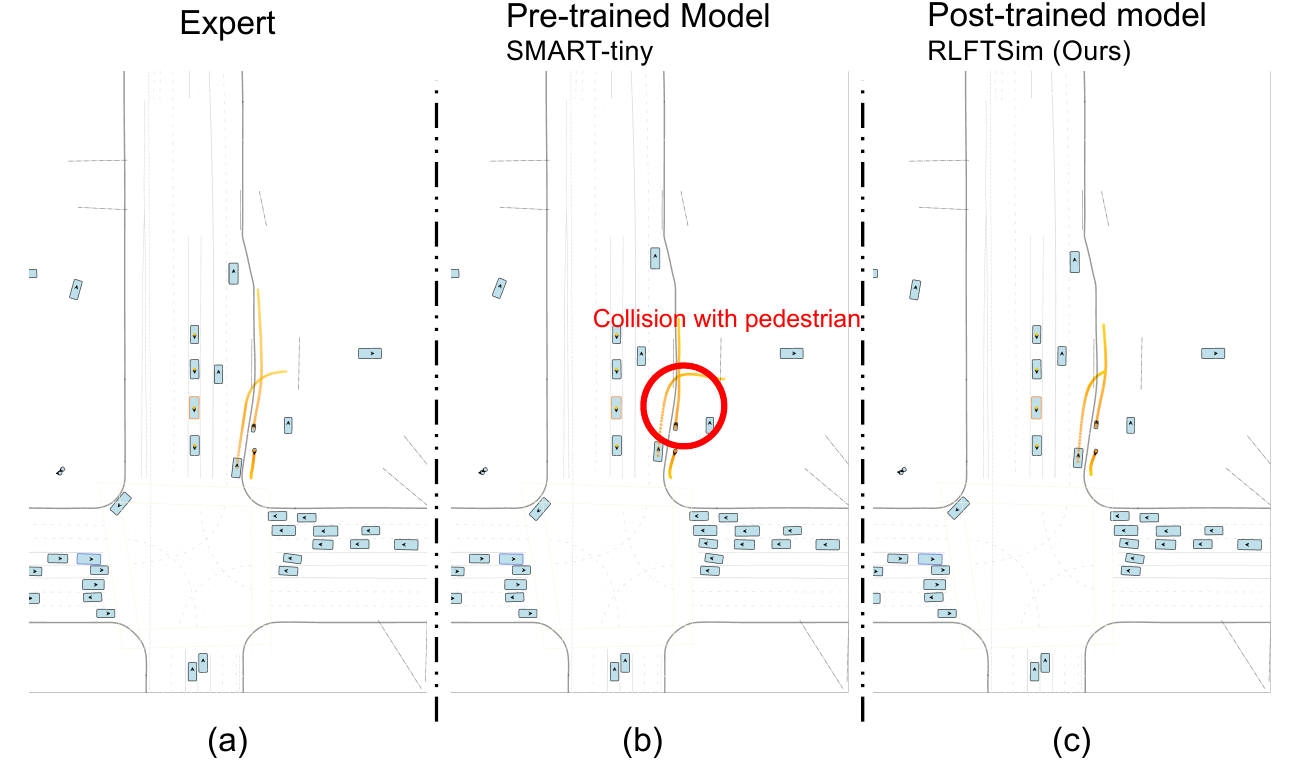}    
    \caption{\textbf{Qualitative Sample - Collision 1.} In the simulation for the pre-trained model (b), the vehicle entering the circle fails to yield to the pedestrian and collides with it. There is no collision in the case of the expert data (a) and RLFTSim (c).
    }
    \label{fig:collision1}
\end{figure*}

\begin{figure*}[!tph]
    \centering
    \includegraphics[width=\figwidth]{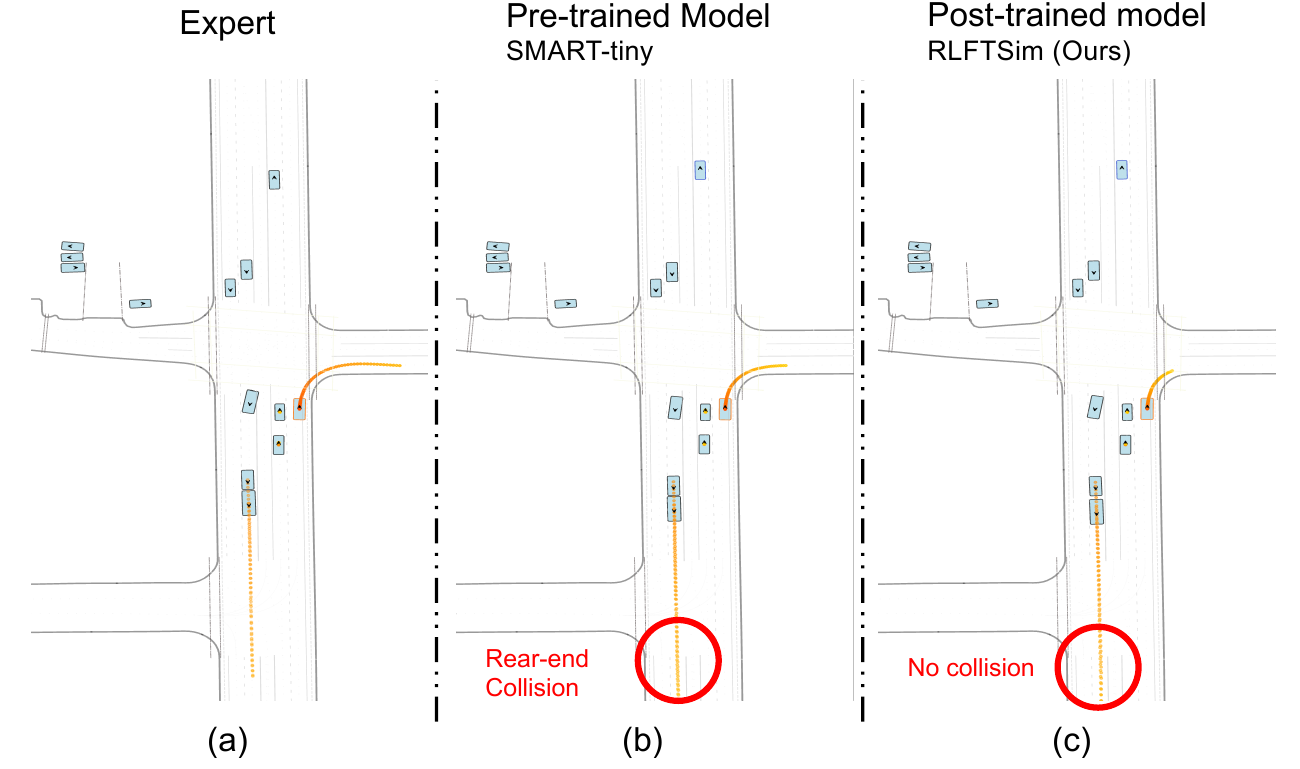}    
    \caption{\textbf{Qualitative Sample - Collision 2.} (b) For the pre-trained model, there is a rear-end collision between two vehicles in the focused zone. (c) However, the post-trained model avoids this accident.
    }
    \label{fig:collision2}
\end{figure*}

\begin{figure*}[!tph]
    \centering
    \includegraphics[width=\figwidth]{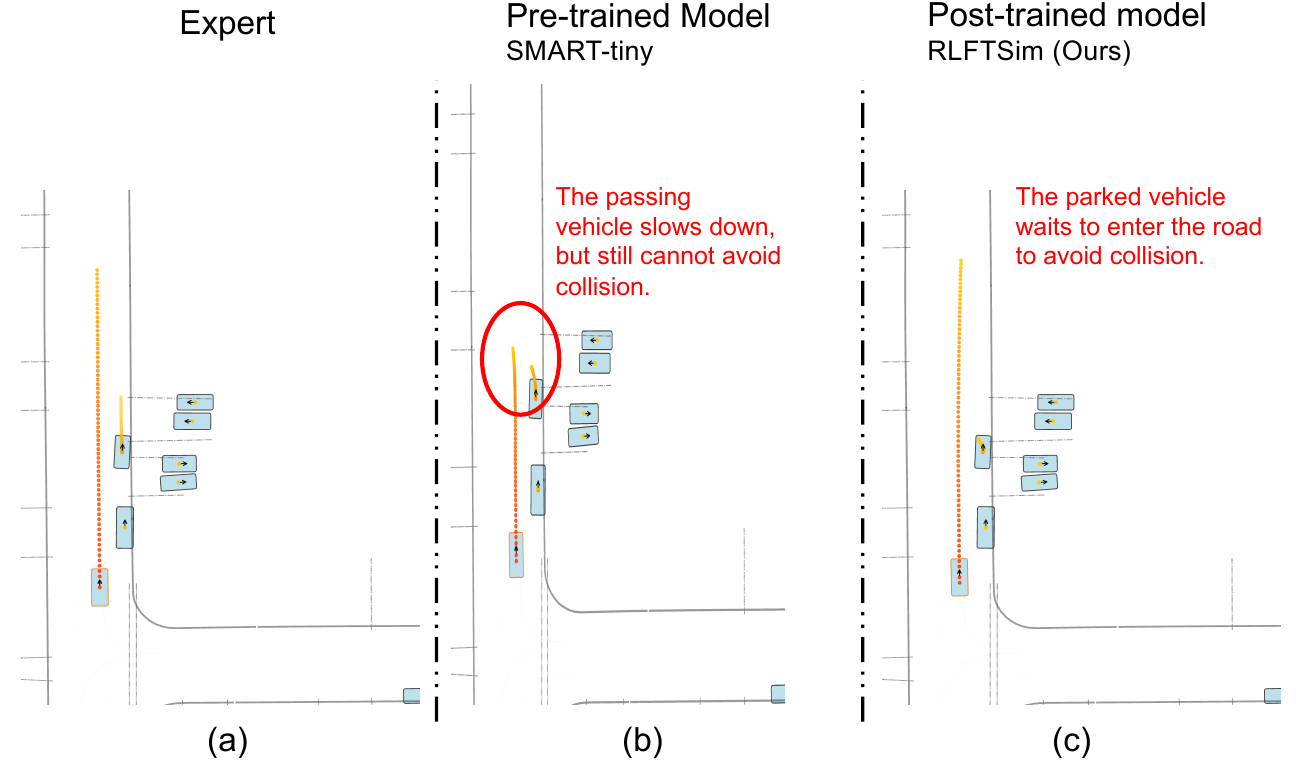}    
    \caption{\textbf{Qualitative Sample - Collision 3.} 
    (a) The parked vehicle starts to move forward, but does not enter the road to avoid a collision. (b) For the pre-trained model, the parked vehicle attempts to enter the road, which leads to a collision with the passing vehicle. The passing vehicle tries to slow down, but it cannot avoid the collision. (c) The parked vehicle waits for the road to clear and then enters the road.
    }
    \label{fig:collision3}
\end{figure*}

\begin{figure*}[!tph]
    \centering
    \includegraphics[width=\figwidth]{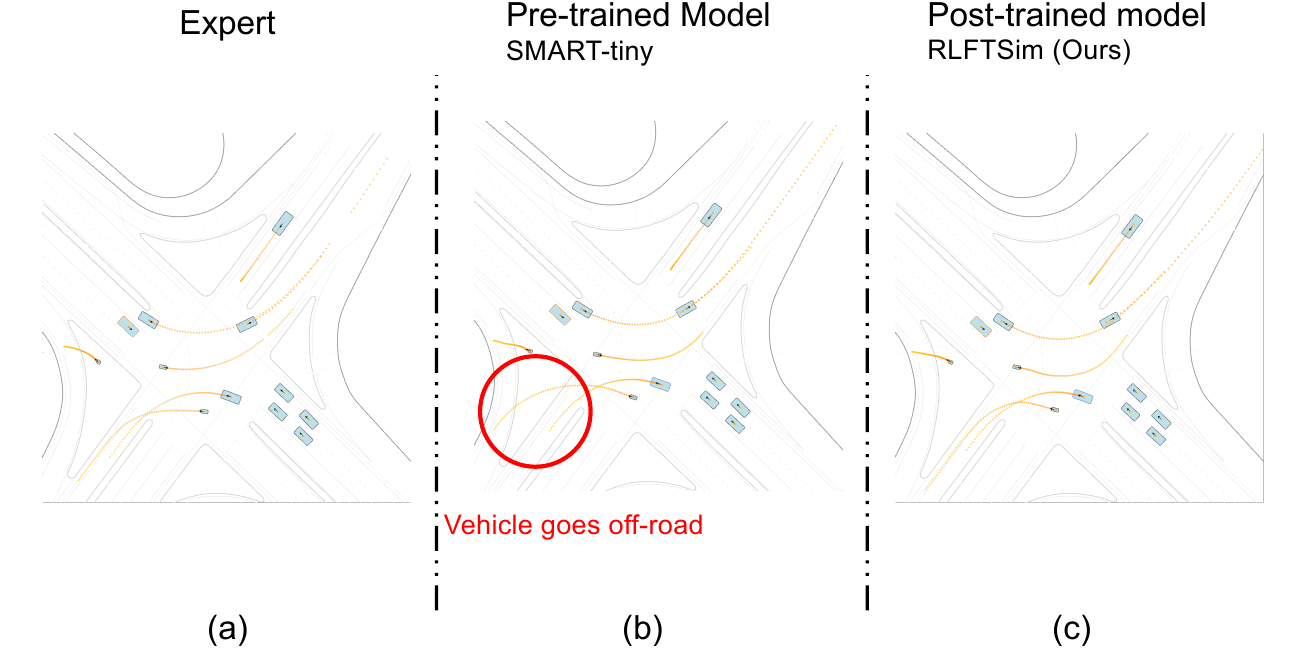}    
    \caption{\textbf{Qualitative Sample - Off-road 1.} (b) The cyclist does not respect the drivable area and goes off-road. For the expert data (a) and the RLFTSim model (c), the cyclist adheres to the drivable area.
    }
    \label{fig:offroad1}
\end{figure*} 
\newcommand*{\figwidthgcft}{0.80\linewidth}

\begin{figure*}[!tp]
    \centering
    \includegraphics[width=\figwidthgcft]{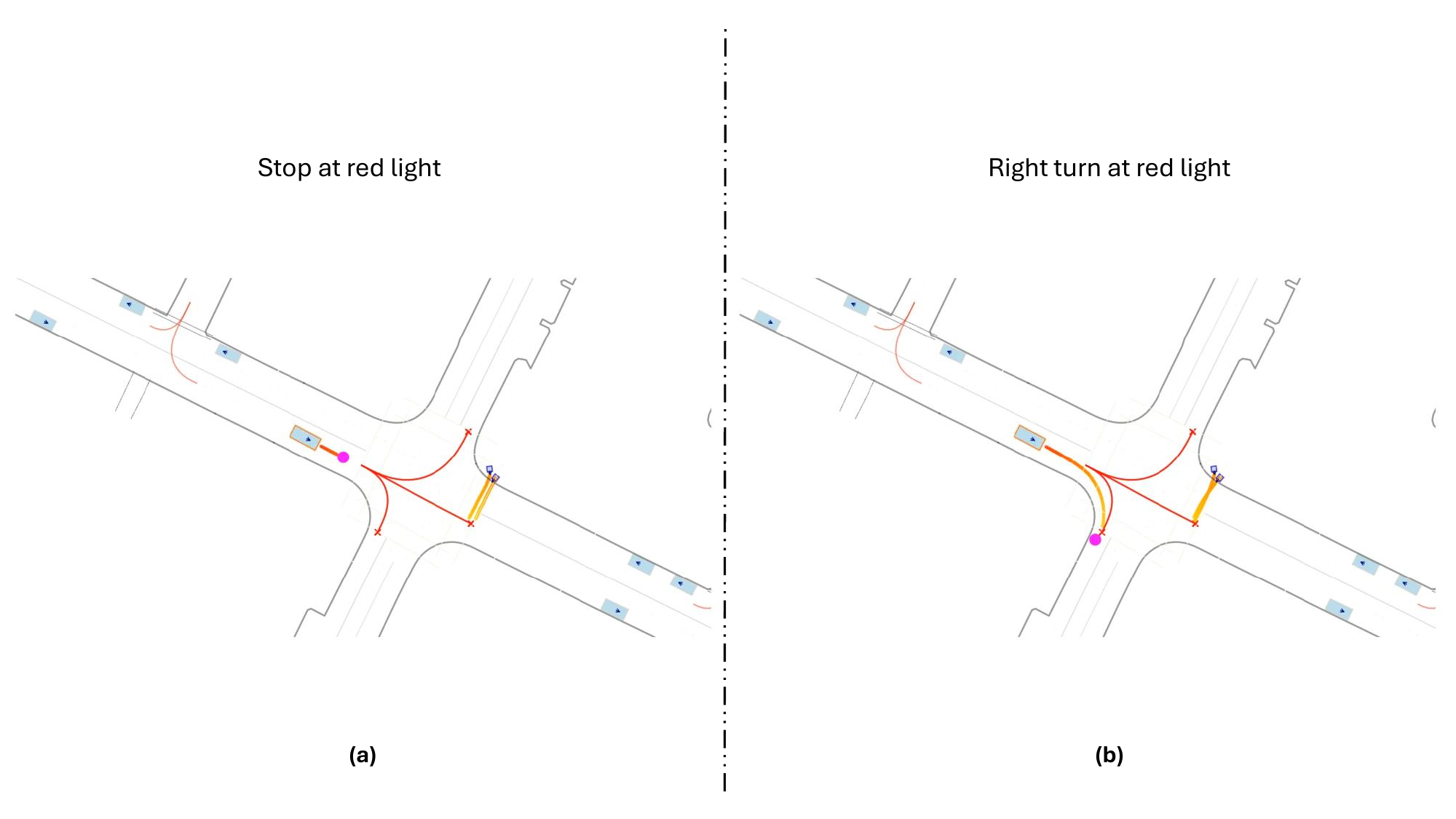}    
    \caption{\textbf{Qualitative Sample - GCFT Red Light} Fine-tuning with GCFT enables greater control over scenario diversity. In this example, the base model only produces rollouts where the ego vehicle stops at the red light. In contrast, GCFT allows for the generation of rollouts where the ego performs either a right turn at the red light (b) or a full stop (a).
    }
    \label{fig:gcft_redlight}
\end{figure*}

\begin{figure*}[!tp]
    \centering
    \includegraphics[width=\figwidthgcft]{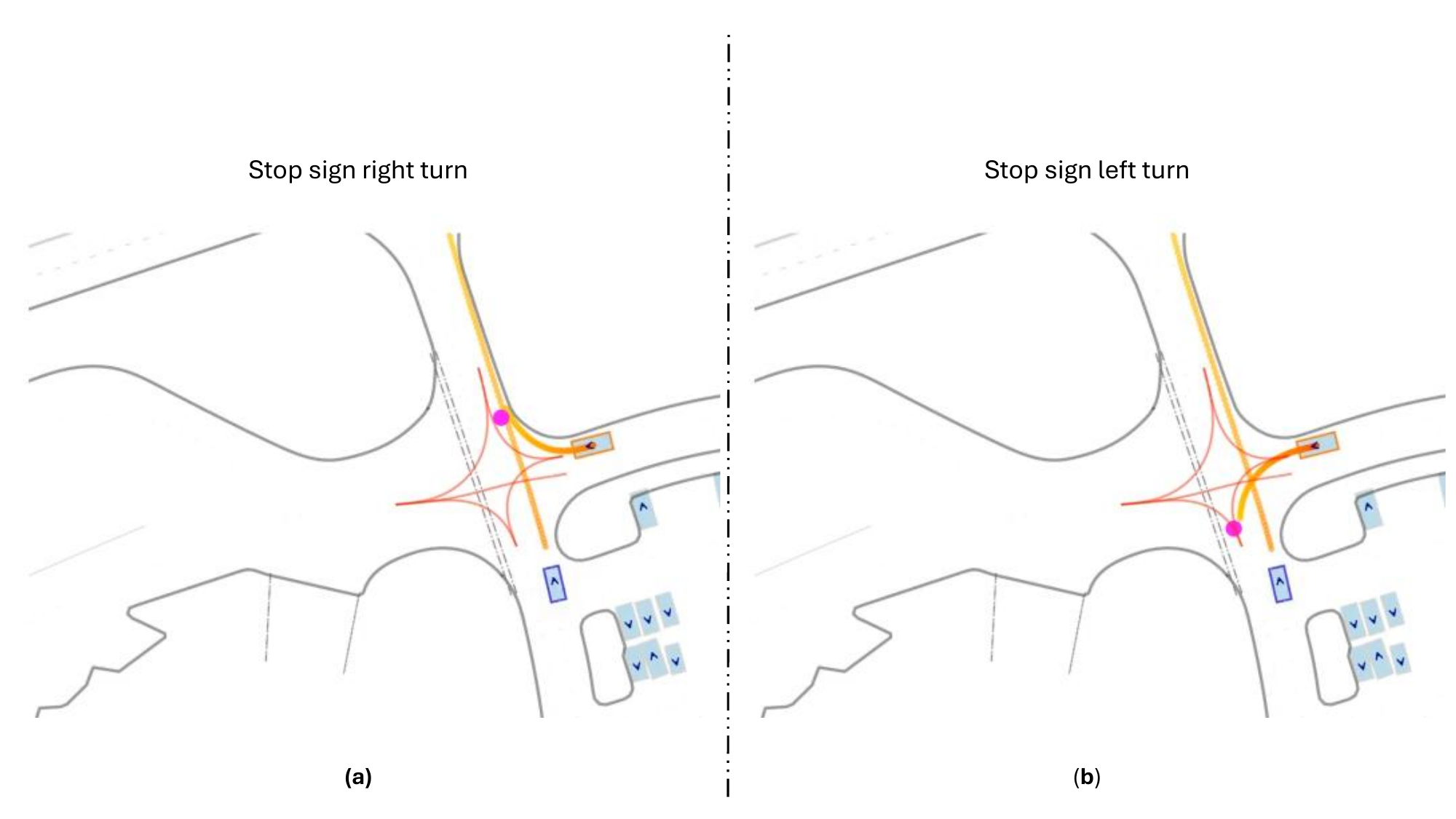}    
    \caption{\textbf{Qualitative Sample - GCFT Stop Sign} Fine-tuning with GCFT enables greater control over scenario diversity. In this example, the base model only produces rollouts where the ego vehicle turns right after stopping. In contrast, GCFT allows for the generation of rollouts where the ego vehicle can turn either right (a) or left (b) at the stop sign.
    }
    \label{fig:gcft_stopsign}
\end{figure*}

\begin{figure*}[!tp]
    \centering
    \includegraphics[width=\figwidthgcft]{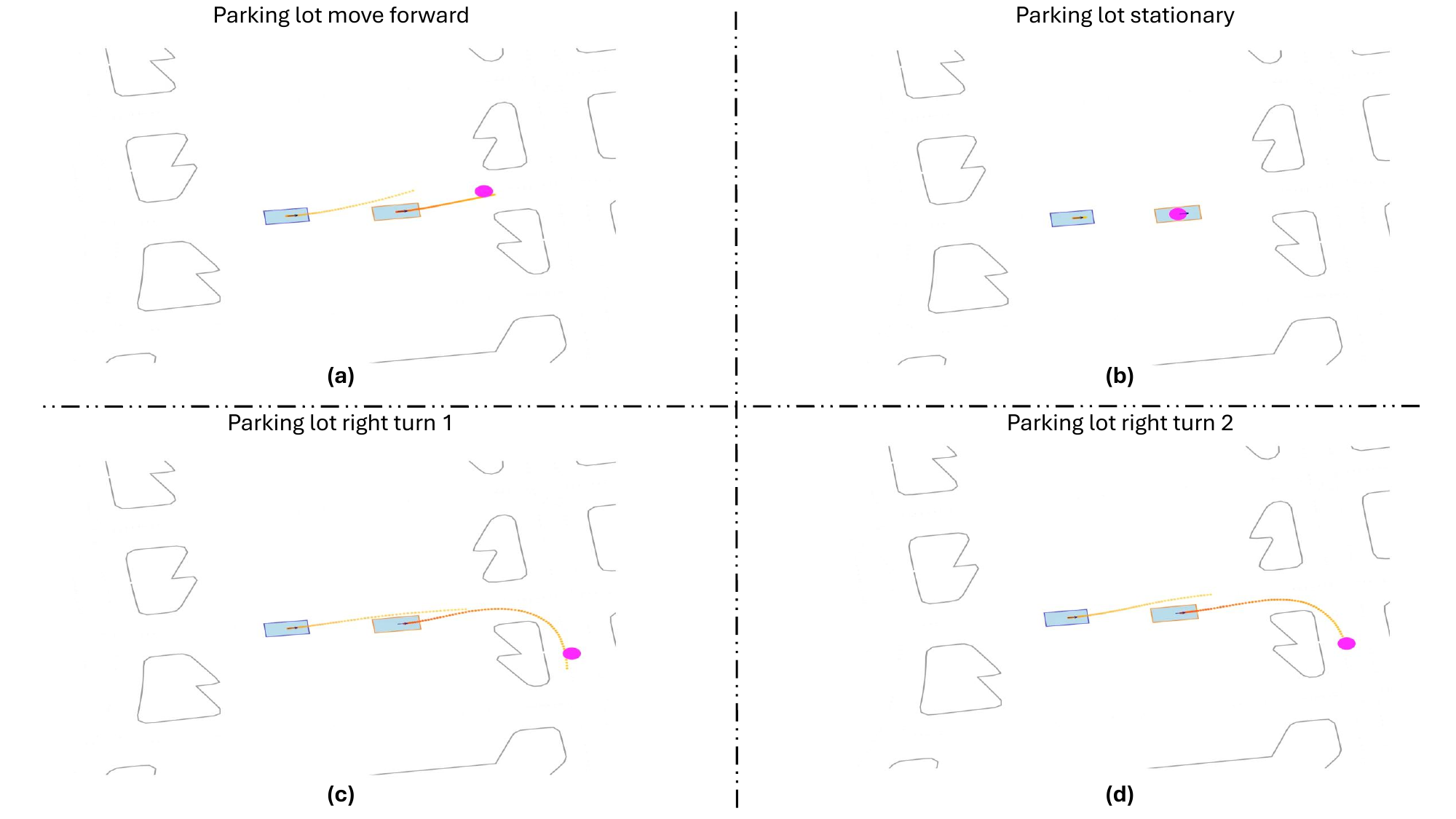}    
    \caption{\textbf{Qualitative Sample - GCFT Parking Lot} Fine-tuning with GCFT allows for behavior creation from otherwise static objects. In the base SMART model, the ego vehicle always remains stationary in this scenario. After GCFT, we can specify for the agent to move forward (a), remain stationary (b), or perform a right turn (c \& d).
    }
    \label{fig:gcft_parking}
\end{figure*}  \fi

\end{document}